%% file: root.tex
\documentclass[letterpaper, 10pt, conference]{cssconf}
\usepackage[pdftex]{graphicx}
\usepackage{amsmath,amsfonts,amssymb}
\usepackage{mathtools}
\usepackage{times}
\usepackage{float}
\usepackage{multirow}
\usepackage{paralist, tabularx}
\usepackage{hyperref}
\usepackage{algorithm}
\usepackage{algpseudocode}
\usepackage[normalem]{ulem}
\usepackage{cite}
\usepackage{color}
\usepackage{booktabs}
\usepackage{makecell}
\usepackage{epsfig}
\usepackage{siunitx}
\usepackage{colortbl}
\usepackage[dvipsnames]{xcolor}
\usepackage{soul}

\newtheorem{theorem}{Theorem}
\newtheorem{remark}{Remark}
\newtheorem{lemma}{Lemma}
\newtheorem{proposition}{Proposition}

\newtheorem{assumption}{Assumption}

\newcommand{\R}{\mathbb{R}}
\newcommand{\norm}[1]{\left\|#1\right\|}

\newcommand{\diag}{\operatorname{diag}}
\DeclareMathOperator*{\argmin}{arg\,min}

\providecommand{\IEEEoverridecommandlockouts}{}
\providecommand{\overrideIEEEmargins}{}
\providecommand{\QED}{\hfill\blacksquare}

\makeatletter
\@ifundefined{iflatexml}{%
  \newif\iflatexml
  \latexmlfalse
}{}
\makeatother

\IEEEoverridecommandlockouts 
\title{\LARGE \bf
Safety-Critical Adaptive Impedance Control via Nonsmooth Control Barrier Functions under State and Input Constraints
}

\author{
Faisal Lawan, Xiaoran Han, Joaquin Carrasco, Barry Lennox, Xiaoxiao Cheng
\thanks{This work was supported by the RAICo Programme and the University of Manchester Strategic Investment Reserve Fund.}
\thanks{F. Lawan, X. Han, J. Carrasco, B. Lennox, and X. Cheng are with the Department of Electrical and Electronic Engineering, The University of Manchester, M13 9PL, United Kingdom, (Corresponding authors: F. Lawan, \texttt{faisal.lawan@manchester.ac.uk}; X. Cheng, \texttt{xiaoxiao.cheng@manchester.ac.uk}.)}
}

\begin{document}
\maketitle
\thispagestyle{empty}
\pagestyle{empty}

\begin{abstract}
    Safe physical interaction is critical for deploying robotic manipulators in human–robot interaction and contact-rich tasks, where uncertainty, external forces, and actuator limitations can compromise both performance and safety. We propose an online adaptive impedance control framework that enforces joint-state safety while achieving compliant interaction under uncertain dynamics. The approach combines a quadratic-program-based safety filter with a novel composed position–velocity non-smooth control barrier function, enabling joint position and velocity constraints to be enforced through a unified relative-degree-one barrier. Unknown dynamics are compensated online using an interval type-2 fuzzy logic system, while actuator torque limits are handled through soft constraints with exact penalty recovery of feasible solutions. A disturbance-observer-enhanced safety mechanism improves robustness against modelling errors and external interaction forces. Using composite Lyapunov analysis, we prove forward invariance of the safe set and the uniform ultimately boundedness of the impedance-tracking error. Simulations on a 7-DOF manipulator with severe parametric uncertainty and external interaction wrenches demonstrate safe constraint satisfaction and robust impedance tracking.
\end{abstract}

%%%%%%%%%%satisfaction of torque constraints %%%%%%%%%%%%%%%%%%%%%%%%%%%%%%%%%%%%%%%
\input{section/Introduction}
\input{section/Problem}
\input{section/Methodology}
\input{section/Theoretical-Results}
\input{section/Tuning-Guidelines}
\input{section/Simulation}
\input{section/Conclusion}
\bibliography{ref}
\bibliographystyle{IEEEtran}
\input{section/Appendix}

\end{document}

%% file: section/Introduction.tex
\section{Introduction}
    \par Physical human–robot interaction (HRI) requires robots to operate safely in proximity to humans while interacting with uncertain and dynamic environments. Impedance control is therefore critical, as it enables robots to regulate the dynamic relationship between motion and interaction forces, allowing safe and compliant contact with humans and uncertain environments~\cite{hogan1985impedance, spong_robot_2022}. It becomes particularly important in tasks such as surface interaction~\cite{li2018force}, haptic sensing~\cite{uttayopas2023object}, and operator assistance in remote handling~\cite{noccaro2025robot}, where accurate dynamic models are unavailable and stable contact must be maintained under uncertainty.

    \par However, impedance control in dynamic HRI faces several key challenges in extreme environments such as nuclear decommissioning. First, robot manipulator dynamics are subject to significant uncertainties, necessitating adaptive regulation during interaction to accurately track the desired impedance behaviour~\cite{xu_adaptive_2023}. Second, robotic joints are subject to safety constraints, including limits on joint position, velocity, and actuator torque, which must be explicitly incorporated into the control design~\cite{zhang2004unified}. Third, discrepancies in task perception and motion planning between the human and the robot can lead to dynamically inconsistent interaction phases~\cite{cheng2025human}, resulting in uncertain and potentially destabilising contact forces that further complicate safety enforcement~\cite{haddadin2016physical}. Incorporating both safety requirements and dynamic uncertainties into impedance control design for robotic manipulators remains a significant challenge.

    \par Among model-based approaches to impedance control, computed torque control~\cite{spong_robot_2022} and Model Predictive Control (MPC) are widely used. While MPC can explicitly handle constraints \cite{bednarczyk2020model}, its effectiveness relies on accurate knowledge of system dynamics, which is often unavailable in HRI applications due to varying human-induced loads and model mismatch. Adaptive control methods address this limitation by estimating uncertain dynamics and updating control parameters online. Classical adaptive schemes typically rely on the linear-in-the-parameters (LIP) assumption~\cite{spong_robot_2022}, whereas learning-based approaches, such as neural networks (NNs) and fuzzy logic systems (FLSs), relax this requirement~\cite{tang_fuzzy_2024}. However, neither class of methods guarantees the satisfaction of state or input constraints.

    \par Control Barrier Function (CBF)~\cite{wabersich_data-driven_2023} provides an effective approach for embedding safety constraints directly into control design by ensuring forward invariance of a safe set. Although standard CBF approaches also require accurate knowledge of the dynamics, several extensions address model uncertainty by using worst-case bounds, disturbance observers, and data-driven estimates~\cite{wabersich_data-driven_2023, sweatland_adaptive_2025}. In particular, the Adaptive Deep Neural Network (ADNN)-based CBF~\cite{sweatland_adaptive_2025} enforces safety online under bounded uncertainty, but it is computationally intensive and limited to tasks with sufficient excitation. The framework in~\cite{dey_state-constrained_2025} combines NNs with CBFs to enable safe online adaptive tracking of manipulators. However, it is restricted to tracking tasks, can induce chattering, and does not explicitly account for actuator torque constraints.

    \par To address the problem of safe impedance tracking in human–robot interaction under multiple constraints, we propose an online adaptive controller that tracks a desired impedance model during human interaction while enforcing state safety and respecting actuator torque limits whenever hard feasibility holds, without requiring prior offline training data. An Interval Type-2 Fuzzy Logic System (IT2-FLS)~\cite{ying_fuzzy_2000, mohammadzadeh_modern_2023} is employed as the function approximator, chosen for its interpretable rule structure and transparent compensation behaviour.

    \par The proposed method operates as follows. A unified soft-constrained quadratic program (QP) with an exact penalty function~\cite{kerrigan_soft_2000} simultaneously enforces joint state constraints, formulated via novel non-smooth CBFs (NCBFs)~\cite{glotfelter_nonsmooth_2017}, as hard constraints while softening actuator torque constraints. A DOB-based robust modification of the NCBFs~\cite{wang_disturbance_2023} robustifies the safety filter under model uncertainty. The QP output is then integrated into a command-driven reference model, adopted from model reference adaptive control (MRAC)~\cite{narendra_stable_2012}, whose filtered reference-model tracking error drives the IT2-FLS adaptation online. Because the reference model is driven by the constrained QP output rather than the unconstrained nominal command, the filtered error $\mathbf{r}$ reflects reference-model tracking mismatch rather than constraint enforcement, thereby decoupling the adaptation loop from the safety filter. By softening only the torque constraints with a sufficiently large penalty weight, the QP solution coincides with the hard-constrained solution whenever feasible and gracefully relaxes torque limits otherwise, thereby eliminating chattering due to infeasibility.

    \par The \textit{primary contributions} of this work are as follows:
    \begin{enumerate}
        \item A novel position-velocity NCBF is composed that encodes joint position and velocity limits in a single relative-degree-one barrier function, yielding a Lipschitz-continuous safety filter without requiring higher-order CBF constructions. This NCBF is embedded as a hard constraint within a unified soft-constrained QP, together with a DOB-based robust modification~\cite{wang_disturbance_2023}, thereby enforcing joint position and velocity safety invariantly while accommodating dynamic uncertainties.
        \item An IT2-FLS-based online adaptive controller that learns unknown structured dynamics in real time. A command-driven reference model, in the spirit of MRAC~\cite{narendra_stable_2012}, integrates the QP output so that the filtered reference-model error $\mathbf{r}$ reflects reference-model tracking mismatch only, ensuring constraint-aware learning. Exact penalty functions~\cite{kerrigan_soft_2000} recover hard torque feasibility when the interaction wrench is admissible and avoid chattering under infeasibility.
        \item A composite Lyapunov stability analysis proving forward invariance of the NCBF safe set $\mathcal{C}^* \subseteq \mathcal{C}_0$, torque satisfaction on the admissible wrench set $\mathcal{F}(\mathbf{x})$, and Uniformly Ultimately Bounded (UUB) impedance tracking with tunable bounds.
    \end{enumerate}

    \par \textbf{Notations:} Let $\R_{\geq0} \coloneqq [0,\infty)$, $\R_{>0} \coloneqq (0,\infty)$, and $\R^{n\times m}$ represent the space of $n\times m$ matrices. Boldface variables denote vectors or matrices. The identity matrix of size $n$ is denoted by $\mathbf{I}_n$, and $\mathbf{0}$ is a sufficiently-size matrix of zeros. Let $\norm{\cdot}$ be the Euclidean norm for vectors and the induced 2-norm for matrices. Let $[d] \coloneqq \{1, \dots, d\}$, for any $d > 1$. For $\mathbf{x} \in \R^n,\ \mathbf{y} \in \R^m$, $(\mathbf{x}, \mathbf{y}) \coloneqq [\mathbf{x}^\top, \mathbf{y}^\top]^\top$. For vectors $\mathbf{u}, \mathbf{v} \in \mathbb{R}^n$, an operation $\circ$ applied \emph{element-wise} is defined by $(\mathbf{u} \circ \mathbf{v})_i = u_i \circ v_i$ for all $i \in [n]$. For a modelling variable $X$, its nominal value is $\hat{X}$, and $\tilde{X} = \hat{X} - X$. The Kronecker product is denoted by $\otimes$; let $|\mathbf{x}|$ denote the element-wise absolute value. The inner product of vectors $\mathbf{a}, \mathbf{b} \in \mathbb{R}^n$ is represented by $\langle \mathbf{a}, \mathbf{b} \rangle$. For a continuously differentiable convex function $f: \R^n \to \R$ and a symmetric positive-definite matrix $\boldsymbol{\Gamma} \in \R^{n \times n}$, the smooth projection operator is defined as $\operatorname{Proj}_{\boldsymbol{\Gamma}}(\boldsymbol{\theta}, \mathbf{y}, f) \coloneqq \boldsymbol{\Gamma} \mathbf{y} - \boldsymbol{\Gamma} \frac{\nabla f(\boldsymbol{\theta})(\nabla f(\boldsymbol{\theta}))^\top}{(\nabla f(\boldsymbol{\theta}))^\top \boldsymbol{\Gamma} \nabla f(\boldsymbol{\theta})} \boldsymbol{\Gamma} \mathbf{y} f(\boldsymbol{\theta})$ if $f(\boldsymbol{\theta}) > 0 \land \mathbf{y}^\top \boldsymbol{\Gamma} \nabla f(\boldsymbol{\theta}) > 0$, and $\operatorname{Proj}_{\boldsymbol{\Gamma}}(\boldsymbol{\theta}, \mathbf{y}, f) \coloneqq \boldsymbol{\Gamma} \mathbf{y}$ otherwise.

%% file: section/Problem.tex
\section{Problem Formulation}\label{sec:problem}
    \par Following \cite{spong_robot_2022}, consider a rigid $n$-DOF robotic manipulator with Euler-Lagrange dynamics 
    \begin{equation}\label{eq: Robot Model}
        \mathbf{M}(\mathbf{q})\ddot{\mathbf{q}} + \mathbf{C}(\mathbf{q}, \dot{\mathbf{q}})\dot{\mathbf{q}} + \mathbf{G}(\mathbf{q}) + \boldsymbol{\tau}_d(t) = \mathbf{u} + \boldsymbol{\tau}_h(t),
    \end{equation}
    where $\mathbf{q}, \dot{\mathbf{q}}, \ddot{\mathbf{q}} \in \R^n $ represent the robot's joint position, velocity, and acceleration vectors; $\mathbf{M}: \R^n \to \R^{n\times n}$ is the positive-definite inertia matrix; $\mathbf{C}: \R^n \times \R^n \to \R^{n\times n}$ is the Coriolis matrix; $\mathbf{G}: \R^n \to \R^n$ is the gravity torque vector; $\boldsymbol{\tau}_d \in \R^n$ is an unknown disturbance term (arising from unmodelled dynamics, friction, etc.); $\boldsymbol{\tau}_h \coloneqq \mathbf{J}(\mathbf{q})^\top \mathbf{F}_h(t)$ is the human interaction torque, where $\mathbf{F}_h(t) \in \R^6$ is the end-effector wrench applied by the human and $\mathbf{J}(\mathbf{q}) \in \R^{6\times n}$ is the manipulator's Jacobian; $\mathbf{u} \in \R^n$ is the actuator torque. The wrench $\mathbf{F}_h(t)$ is assumed measurable (e.g., via a wrist force--torque sensor).

    \begin{assumption}\label{assum: Scenario}
        The following assumptions regarding the robotic manipulator and its task are made:
        \begin{enumerate}
            \renewcommand{\labelenumi}{(\roman{enumi})}
            \item a bounded, continuously differentiable trajectory $(\mathbf{z}_d(t), \dot{\mathbf{z}}_d(t), \ddot{\mathbf{z}}_d(t))$ in task space is predefined, where $\mathbf{z}_d(t) \in \R^6$;
            \item the smooth forward kinematics denoted $\mathbf{h}_z: \R^n \to \R^6$ and Jacobian denoted $\mathbf{J}: \R^n \to \R^{6\times n}$ are known so that the actual end-effector pose satisfies $\mathbf{z} = \mathbf{h}_z(\mathbf{q})$;
            \item the disturbance  $\boldsymbol{\tau}_d(t)$ is bounded and Lipschitz-continuous.
        \end{enumerate}
    \end{assumption}

    \par Assume that the true parameters are unknown, but identified nominal rigid-body parameters $(\hat{\mathbf{M}}(\mathbf{q}), \hat{\mathbf{C}}(\mathbf{q}, \dot{\mathbf{q}}), \hat{\mathbf{G}}(\mathbf{q}))$ are available \cite{gaz_dynamic_2019}, giving the modelled dynamics
    \begin{equation}\label{eq: Estimated Model}
        \hat{\mathbf{M}}(\mathbf{q})\ddot{\mathbf{q}} + \hat{\mathbf{C}}(\mathbf{q}, \dot{\mathbf{q}})\dot{\mathbf{q}} + \hat{\mathbf{G}}(\mathbf{q}) = \mathbf{u} + \boldsymbol{\tau}_h(t).
    \end{equation}

    \par Consider the desired impedance model dynamics below
    \begin{equation}\label{eq: Impedance Model}
        \mathbf{M}_d \ddot{\mathbf{e}}_{z, \mathrm{imp}}(t) + \mathbf{B}_d \dot{\mathbf{e}}_{z, \mathrm{imp}}(t) + \mathbf{K}_d \mathbf{e}_{z, \mathrm{imp}}(t) =  \mathbf{F}_h (t),
    \end{equation}
    where $\mathbf{e}_{z, \mathrm{imp}}(t) = \mathbf{z}_{d}(t) - \mathbf{z}_{\mathrm{imp}}(t)$ is a virtual task-space tracking error, $\mathbf{M}_d,\ \mathbf{B}_d,\ \mathbf{K}_d \in \R^{6\times 6}$ are positive-definite designer matrices, and $(\mathbf{z}_{\mathrm{imp}}, \dot{\mathbf{z}}_{\mathrm{imp}})$ solves \eqref{eq: Impedance Model}. In general, $\mathbf{z}_{\mathrm{imp}} \neq \mathbf{z}_d$ unless $\mathbf{F}_h = \mathbf{0}$. For notational simplicity, the independent variable $t$ is dropped.

    \par For a generally defined control input $\mathbf{u}$, let $\mathbf{x} = (\mathbf{q},\dot{\mathbf{q}})$ denote the full joint state and $\mathbf{e}_{z} = \mathbf{z}_{d} - \mathbf{z}$ the actual task-space tracking error. Let $\mathcal{C} \subset \R^{2n}$ denote the set of joint states that satisfy the prescribed position and velocity limits. For control design, we select a compact convex subset $\mathcal{C}_0 \subset \mathcal{C}$ that defines the intended safe operating region. Define the admissible torque set $\mathcal{U} \coloneqq [ \boldsymbol{\tau}_{\min}, \boldsymbol{\tau}_{\max}]$ and the barrier-free set $\mathcal{X}_b \subset \mathcal{C}_0$ as a compact set strictly in the interior of $\mathcal{C}_0$, where the CBF constraints are inactive. Let $\mathbf{q}_{\mathrm{imp}}$ denote the joint-space counterpart of $\mathbf{z}_{\mathrm{imp}}$ obtained by kinematic inversion \cite{spong_robot_2022}, and let $\mathbf{x}_{\mathrm{imp}} =(\mathbf{q}_{\mathrm{imp}},\dot{\mathbf{q}}_{\mathrm{imp}})$.

    \par Let $\mathcal{F}(\mathbf{x}) \subset \R^6$ denote the set of interaction wrenches for which hard satisfaction of both the state-safety and torque constraints is feasible at $\mathbf{x}$. A constructive characterization of $\mathcal{F}(\mathbf{x})$ is given in Section~\ref{subsec: pch}.
    
    \par The control objective is to design a Lipschitz-continuous control law $\mathbf{u}(\mathbf{x}, t)$ such that, for all $t \in \R_{\geq 0}$, the following statements hold:
    \begin{enumerate}
        \renewcommand{\labelenumi}{(\roman{enumi})}
        \item for every initial condition $\mathbf{x}(0) \in \mathcal{C}_0$, the closed-loop system satisfies $\mathbf{x}(t) \in \mathcal{C}_0$;
        \item whenever $\mathbf{F}_h(t) \in \mathcal{F}(\mathbf{x}(t))$, the actuator torque satisfies $\mathbf{u}(t) \in \mathcal{U}$;
        \item for all trajectories starting in $\mathcal{C}_0$, the tracking error of the desired impedance trajectory $\mathbf{z}_{\mathrm{imp}}(t) - \mathbf{z}(t)$ is uniformly ultimately bounded.
    \end{enumerate}

%% file: section/Methodology.tex
    \begin{figure*}[t]
      \centering
      \iflatexml
        % LaTeXML will process this line for the HTML build
        \includegraphics[width=\textwidth]{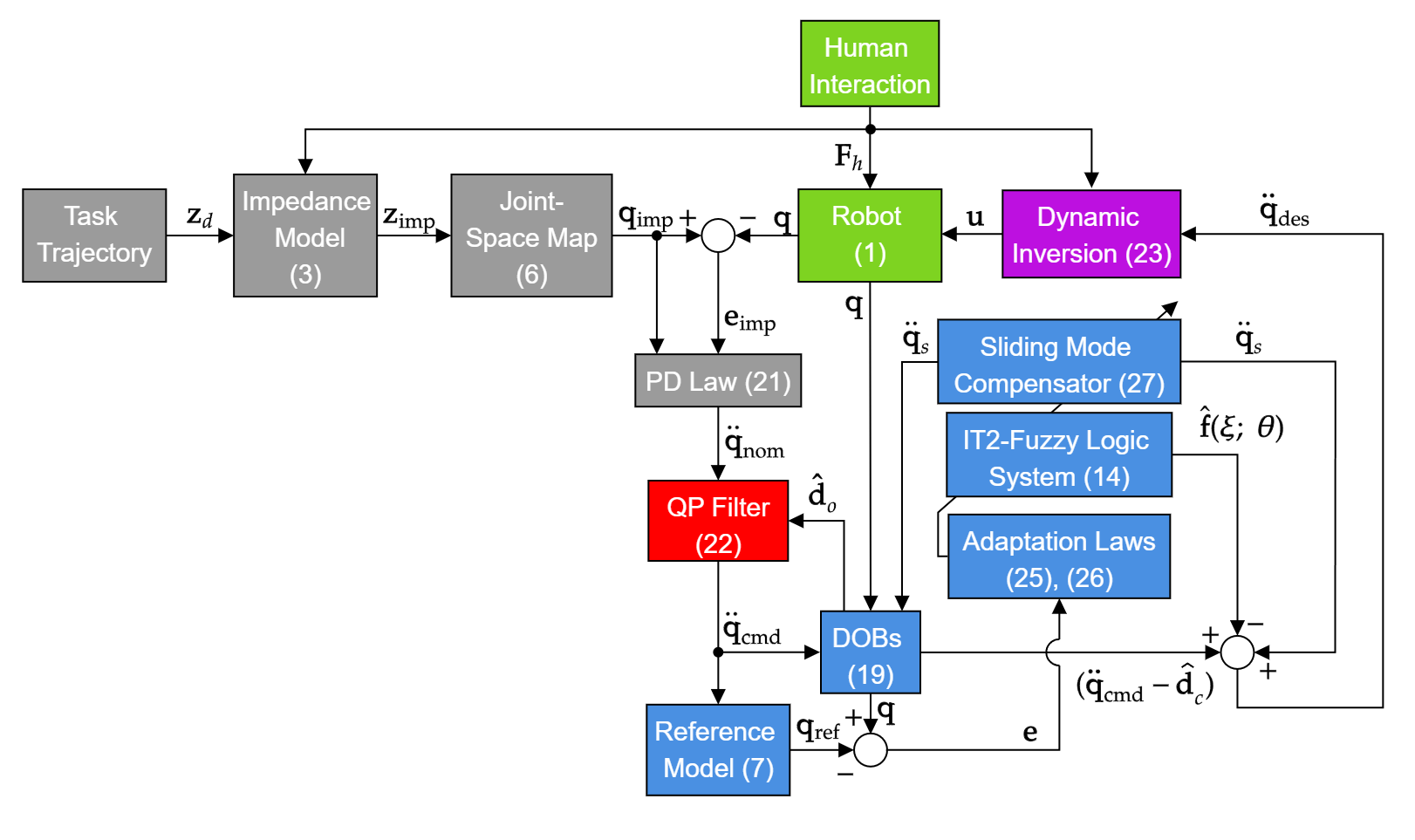}
      \else
        % pdflatex will process this line for the PDF build
        \includegraphics[width=\textwidth]{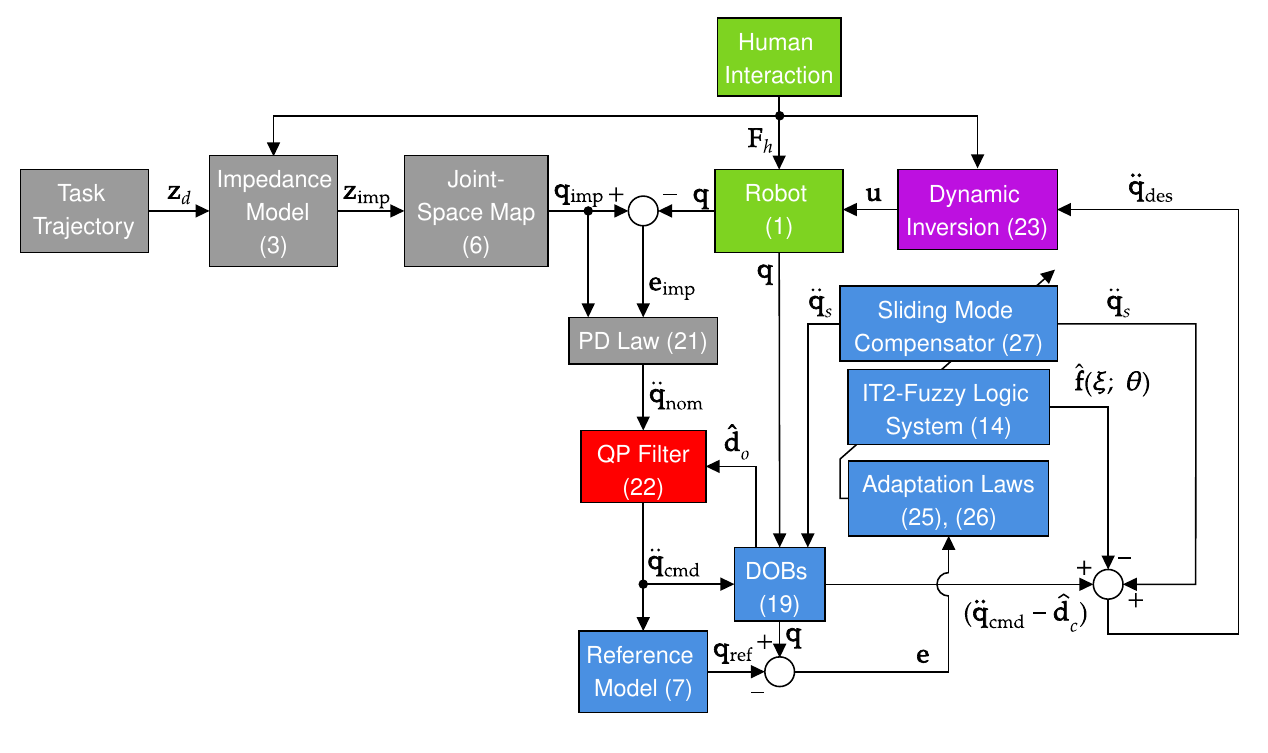}
      \fi
      \caption{Block diagram of the proposed safety-critical online adaptive impedance controller. The architecture is colour-coded by functional module: grey blocks handle impedance trajectory generation and nominal acceleration computation via the impedance model, joint-space map, and PD law; blue blocks provide online adaptation and constraint-aware reference tracking via the IT2-FLS, disturbance observers, sliding mode compensator, adaptation laws, and command-driven reference model; the red block enforces state and torque constraints through the unified soft-constrained QP with embedded NCBFs and exact penalty functions; green blocks represent the physical robot and human interaction; and the purple block executes dynamic inversion for torque generation.}
      \label{fig:architecture}
      \vspace{-5mm}
    \end{figure*}

\section{Methodology}\label{sec:design}
    \subsection{Impedance Trajectory Generation}\label{subsec: impedance}
        \par The impedance model~\eqref{eq: Impedance Model} is formulated in task space $\R^6$, whereas the controller operates in joint space $\R^n$. We map task-space impedance acceleration to joint space via DLS; alternative mappings may be used without affecting the subsequent guarantees.

        \par Let $\mathbf{J} = \mathbf{U} \boldsymbol{\Sigma} \mathbf{V}^\top$ be the singular value decomposition of the Jacobian. Because $n > 6$, near-singular configurations can arise, causing large joint velocities for small task-space commands. We employ an \emph{adaptive damped least-squares} (DLS) inverse~\cite{spong_robot_2022}:
        \begin{equation}\label{eq: DLS}
            \mathbf{J}^{\dagger}_{\mathrm{DLS}} = \mathbf{V} \boldsymbol{\Sigma} (\boldsymbol{\Sigma}^2 + \mathbf{K}_{\lambda})^{-1} \mathbf{U}^\top,
        \end{equation}
        where $\mathbf{K}_{\lambda} = \diag(k_{\lambda,1},\dots,k_{\lambda,6})$ with $k_{\lambda, i} = k_{\max}\, g(\sigma_i)$, $g(\sigma_i) = 2/(1 + e^{\alpha_s \sigma_i})$ is a sigmoid that scales the damping inversely with the singular value $\sigma_i$, and $k_{\max}, \alpha_s \in \R_{>0}$ are design parameters. The damping thus vanishes when the manipulator is far from a singularity and grows smoothly as any singular value decreases, bounding joint velocities without introducing discontinuities.

        \par However, near a singularity, DLS alone can still produce a task-space acceleration that demands motion in nearly unreachable directions. To prevent this, we apply a \emph{singularity-consistent} projection to the task-space impedance acceleration $\ddot{\mathbf{z}}_{\mathrm{imp}} \in \R^6$. Let $\mathbf{u}_i$ be the $i$-th column of $\mathbf{U}$ and define the weighting
        \begin{equation}\label{eq: singularity weight}
            w_i = \begin{cases}
                0, & \sigma_i \geq \bar{\sigma}, \\
                \tfrac{1}{2}\!\left(1 + \cos\!\left(\pi\tfrac{\sigma_i - \underline{\sigma}}{\bar{\sigma} - \underline{\sigma}}\right)\right), & \underline{\sigma} < \sigma_i < \bar{\sigma}, \\
                1, & \sigma_i \leq \underline{\sigma},
            \end{cases}
        \end{equation}
        where $\underline{\sigma} < \bar{\sigma}$ are activation thresholds. The singularity-consistent acceleration is then $\bar{\ddot{\mathbf{z}}}_{\mathrm{imp}} = \ddot{\mathbf{z}}_{\mathrm{imp}} - \sum_{i} w_i (\mathbf{u}_i^\top \ddot{\mathbf{z}}_{\mathrm{imp}}) \mathbf{u}_i$, which smoothly zeroes acceleration components in near-singular task-space directions, preventing discontinuous jumps in the joint-space reference.

        \par To utilise the redundancy of the manipulator for energy dissipation, we add a null-space damping term. The resulting impedance joint acceleration is
        \begin{equation}\label{eq: impedance joint accel}
            \begin{aligned}
                \ddot{\mathbf{q}}_{\mathrm{imp}} &= \mathbf{J}^{\dagger}_{\mathrm{DLS}}\!\left(\bar{\ddot{\mathbf{z}}}_{\mathrm{imp}} - \dot{\mathbf{J}}\dot{\mathbf{q}}_{\mathrm{imp}}\right) \\
                &- (\mathbf{I}_n - \mathbf{J}^{\dagger}_{\mathrm{DLS}} \mathbf{J})(\mathbf{K}_{\mathrm{damp}}\dot{\mathbf{q}}_{\mathrm{imp}}),
            \end{aligned}
        \end{equation}
        where $\mathbf{K}_{\mathrm{damp}} \in \R^{n\times n}$ is a positive-definite damping gain. The null-space projector $(\mathbf{I}_n - \mathbf{J}^{\dagger}_{\mathrm{DLS}} \mathbf{J})$ ensures that the joint damping does not interfere with the task-space impedance behaviour.

    \subsection{Safety-Filtered Reference Model and NCBF Design}\label{subsec: Ref Gov}
        \par The NCBFs below define hard acceleration bounds in the unified QP of Section~\ref{subsec: pch}. A command-driven reference model~\cite{narendra_stable_2012} tracks the QP output $\ddot{\mathbf{q}}_{\mathrm{cmd}}$ through $\mathbf{x}_{\mathrm{ref}} = (\mathbf{q}_{\mathrm{ref}},\dot{\mathbf{q}}_{\mathrm{ref}}) \in\R^{2n}$ with double-integrator dynamics
        \begin{equation}\label{eq: Reference Governor}
            \dot{\mathbf{x}}_{\mathrm{ref}} = \begin{bmatrix}
                \mathbf{0} & \mathbf{I}_n \\
                \mathbf{0} & \mathbf{0}
            \end{bmatrix}\mathbf{x}_{\mathrm{ref}} + \begin{bmatrix}
                \mathbf{0} \\
                \mathbf{I}_n
            \end{bmatrix}\ddot{\mathbf{q}}_{\mathrm{u}},
        \end{equation}
        where $\ddot{\mathbf{q}}_{\mathrm{u}} = \ddot{\mathbf{q}}_{\mathrm{cmd}} + \mathbf{K}_v(\dot{\mathbf{q}} - \dot{\mathbf{q}}_{\mathrm{ref}}) + \mathbf{K}_p(\mathbf{q} - \mathbf{q}_{\mathrm{ref}})$, $\mathbf{K}_v, \mathbf{K}_p \in \R^{n\times n}$ are positive-definite diagonal matrices satisfying $\mathbf{K}_v = \boldsymbol{\Lambda} + \mathbf{K}_r$ and $\mathbf{K}_p = \mathbf{K}_r \boldsymbol{\Lambda}$, and $(\boldsymbol{\Lambda}, \mathbf{K}_r)$ are defined in Section~\ref{subsec: fuzzy}.

        \par Let $\mathcal{J} \coloneqq [n]$ denote the set of joint indices. Let $\mathcal{J}_v$ denote joints subject only to velocity constraints, and $\mathcal{J}_p$ those subject to both position and velocity constraints ($\mathcal{J} = \mathcal{J}_v \cup \mathcal{J}_p$). The physical limit set $\mathcal{C} \coloneqq \{ \mathbf{x} \in \R^{2n} : (|\dot{q}_i| \leq  \dot{q}_{\max, i}\, \forall i \in \mathcal{J}) \; \land  (|q_i| \leq q_{\max, i}\, \forall i \in \mathcal{J}_p) \}$ was introduced in Section~\ref{sec:problem}; the designer selects a compact convex subset $\mathcal{C}_0 \subset \mathcal{C}$ as the intended operating region.

        \subsubsection*{Velocity-only joints ($i\in\mathcal{J}_v$)}
        \par A valid NCBF for the velocity constraint set is
        \begin{equation}\label{eq: velocity NCBF}
            {^i}h_v(\mathbf{x}) \coloneqq \min_{j \in [2]}{\{{^i}h_{vj}(\mathbf{x}) \}},
        \end{equation}
        where ${^i}h_{v1}(\mathbf{x}) \coloneqq \dot{q}_{\max, i} - x_2$ and ${^i}h_{v2}(\mathbf{x}) \coloneqq \dot{q}_{\max, i} + x_2$. Here, the left superscript $i$ denotes the barrier function for the $i$-th joint, and $x_{1},x_{2}$ are its position and velocity states. 

        \begin{remark}
            A function is \emph{Clarke regular} at a point if its generalised directional derivative coincides with the one-sided directional derivative~\cite{glotfelter_nonsmooth_2017}. The function \eqref{eq: velocity NCBF} is not Clarke regular due to its boolean composition~\cite{glotfelter_nonsmooth_2017}. As ${^i}h_v$ is Lipschitz, but not Clarke regular, we invoke the weak set-valued Lie derivative result~\cite{glotfelter_nonsmooth_2017} for the NCBF.
        \end{remark}

        \subsubsection*{Position-and-velocity joints ($i\in\mathcal{J}_p$)}
        \par Composing ${^i}h_{v j}$ naively with position barriers directly yields a relative-degree-2 constraint, making the CBF-Quadratic Programme (CBF-QP) solution non-Lipschitz~\cite{ames_control_2019, kim_is_2026}. Instead, we propose the NCBF composition:
        \begin{equation}\label{eq: position NCBF}
            {^i}h_p(\mathbf{x}) \coloneqq \min_{j \in [2]}{\{{^i}h_{pj}(\mathbf{x}) \}},
        \end{equation}
        with ${^i}h_{p1}(\mathbf{x}) \coloneqq 1 - R_1^l\!\left(\tfrac{x_1}{q_{\max, i}}\right) - R_1\!\left(\tfrac{x_2}{\dot{q}_{\max, i}} \right)$ and ${^i}h_{p2}(\mathbf{x}) \coloneqq 1 - R_2^l\!\left(\tfrac{x_1}{q_{\max, i}}\right) - R_2\!\left( \tfrac{x_2}{\dot{q}_{\max, i}} \right)$, where $R_1(\cdot) \coloneqq \max\{0, (\cdot) \}$, $R_2(\cdot) \coloneqq \max\{0, -(\cdot) \}$ and $l \geq 2$ is an even integer. Similar to ${^i}h_v$, ${^i}h_p$ is Lipschitz but not Clarke regular.

       \par  Applying the weak set-valued Lie derivative from~\cite{glotfelter_nonsmooth_2017} and Brezis' Theorem~\cite{brezis_characterization_1970}, the NCBF conditions 
        \begin{align}
                & \min{\mathcal{L}_F^W({^i}h_v(\mathbf{x}_i))} \geq -\gamma_v {^i}h_v(\mathbf{x}_i) , \quad \forall i \in \mathcal{J}_v, \label{eq: vel_cbf_cond}\\
                & \min{\mathcal{L}_F^W({^i}h_p(\mathbf{x}_i))} \geq -\gamma_p {^i}h_p(\mathbf{x}_i) , \quad \forall i \in \mathcal{J}_p, \label{eq: pos_cbf_cond}
        \end{align}
        where $\gamma_v, \gamma_p \in \R_{> 0}$ are tuning parameters controlling the convergence rate to the constraint boundary (larger $\gamma$ allows faster approach, but requires larger torque), and 
        \begin{align*}
            \mathcal{L}_F^W(h(\mathbf{x})) \coloneqq \{a \in \R &: \exists\ \mathbf{v} \in \mathbf{F}(\mathbf{x}),\ \boldsymbol{\zeta} \in \partial{h(\mathbf{x})} \\ & \text{ s.t.\ } \langle \boldsymbol{\zeta}, \mathbf{v} \rangle = a \},
        \end{align*}
        denotes the weak set-valued Lie derivative, with $\partial h$ the Clarke generalised gradient of $h$~\cite{glotfelter_nonsmooth_2017} and $\mathbf{F}(\mathbf{x})$ the flow of the closed-loop vector field, translate into the element-wise acceleration bounds $\boldsymbol{\alpha}_{A} \leq \ddot{\mathbf{q}} \leq \boldsymbol{\alpha}_{B}$. Let $\bar{l} = l -1$. For velocity-only joints $(\alpha_{A,i},\alpha_{B,i}) = (\alpha_{vA,i},\alpha_{vB,i})$ and for position-and-velocity joints $(\alpha_{A,i},\alpha_{B,i}) = (\alpha_{pA,i},\alpha_{pB,i})$, with
        \[
            \begin{aligned}
                \alpha_{vA, i} &= -\gamma_v{^i}h_{v2}(\mathbf{x}_{i}), \quad \alpha_{vB, i} = \gamma_v{^i}h_{v1}(\mathbf{x}_{i}), \\
                \alpha_{pA, i} &= \dot{q}_{\max, i}\!\left(-l\frac{\dot{q}_{i}}{q_{\max, i}}R_2^{\bar{l}} \!\left(\tfrac{q_{i}}{q_{\max, i}} \right) - \gamma_p {^i}h_{p2}(\mathbf{x}_{i}) \right), \\
                \alpha_{pB, i} &= \dot{q}_{\max, i}\!\left(-l\frac{\dot{q}_{i}}{q_{\max, i}} R_1^{\bar{l}} \!\left(\tfrac{q_{i}}{q_{\max, i}} \right) + \gamma_p {^i}h_{p1}(\mathbf{x}_{i}) \right).
            \end{aligned}
        \]
        These bounds are enforced as hard constraints in the QP~\eqref{eq: Torque Constraints}.

        \par Define the NCBF super-level set $\mathcal{C}^* \coloneqq \{\mathbf{x} \in \mathbb{R}^{2n} : {^i}h_v(\mathbf{x}_i) \geq 0,\, \forall i \in \mathcal{J}_v;\; {^i}h_p(\mathbf{x}_i) \geq 0,\, \forall i \in \mathcal{J}_p \}$, with $\mathcal{C}^* \subseteq \mathcal{C}_0 \subseteq \mathcal{C}$ for suitably chosen $(l, \gamma_v, \gamma_p, \beta, \bar{\omega}_1, \alpha_o, \nu)$. On $\mathcal{C}^*$, $\alpha_{A,i}\leq 0 \leq\alpha_{B,i}$ for $\gamma_v, \gamma_p > 0$.

    \subsection{Fuzzy Logic and Disturbance Observer Design}\label{subsec: fuzzy}
        \par Given $\mathbf{q}_{\mathrm{ref}}(t)$ from~\eqref{eq: Reference Governor}, define the reference-model tracking error $\mathbf{e}(t) = \mathbf{q}_{\mathrm{ref}}(t) - \mathbf{q}(t)$ and the filtered reference-model tracking error $\mathbf{r}(t) = \dot{\mathbf{e}}(t) + \boldsymbol{\Lambda} \mathbf{e}(t)$, where $\boldsymbol{\Lambda}, \mathbf{K}_r \in \R^{n\times n}$ are positive-definite diagonal matrices. Because $\mathbf{q}_{\mathrm{ref}}$ is driven by $\ddot{\mathbf{q}}_{\mathrm{cmd}}$, $\mathbf{r}$ reflects mismatch between the plant and the command-driven reference model, not constraint enforcement or impedance-model tracking error.

        \par Let the nominal control torque be
        \begin{equation}\label{eq: Nominal Control}
            \mathbf{u}_{\mathrm{nom}} 
            = \hat{\mathbf{M}}(\mathbf{q}) \ddot{\mathbf{q}}_{\mathrm{des}} 
            + \hat{\mathbf{C}}(\mathbf{q}, \dot{\mathbf{q}})\dot{\mathbf{q}} 
            + \hat{\mathbf{G}}(\mathbf{q}) 
            - \boldsymbol{\tau}_h,
        \end{equation}
        where $\ddot{\mathbf{q}}_{\mathrm{des}}$ is defined in \eqref{eq: Acceleration Control} below. Substituting \eqref{eq: Nominal Control} into \eqref{eq: Robot Model} gives the dynamics
        \begin{equation}\label{eq: Acceleration Dynamics}
            \ddot{\mathbf{q}} = \ddot{\mathbf{q}}_{\mathrm{des}} 
            + \mathbf{M}^{-1} \!\left(\tilde{\mathbf{M}} \ddot{\mathbf{q}}_{\mathrm{des}} 
            + \tilde{\mathbf{C}}\dot{\mathbf{q}} 
            + \tilde{\mathbf{G}} 
            - \boldsymbol{\tau}_d \right).
        \end{equation}

        \par Design $\ddot{\mathbf{q}}_{\mathrm{des}}$ to stabilise the tracking error in the presence of the unknown disturbance term and model mismatch:
        \begin{equation}\label{eq: Acceleration Control}
            \ddot{\mathbf{q}}_{\mathrm{des}} 
            = \ddot{\mathbf{q}}_{\mathrm{cmd}} 
            + \ddot{\mathbf{q}}_{s} 
            - \hat{\mathbf{d}}_c 
            - \hat{\mathbf{f}},
        \end{equation}
        where $\ddot{\mathbf{q}}_{\mathrm{cmd}}$ is the commanded acceleration from the QP~\eqref{eq: Torque Constraints}, $\hat{\mathbf{f}}$ \eqref{eq: fls} is an IT2-FLS approximator, $\hat{\mathbf{d}}_c$ \eqref{eq: dob} is a DOB-based compensator, and $\ddot{\mathbf{q}}_{s}$ \eqref{eq: Smooth SMC} is an adaptive sliding-mode compensator (SMC).

        \par Let $\bar{\boldsymbol{\xi}} = \ddot{\mathbf{q}}_{s} - \hat{\mathbf{d}}_c$. Substituting~\eqref{eq: Acceleration Control} into~\eqref{eq: Acceleration Dynamics}, one obtains
        \begin{equation}\label{eq: Simplified Dynamics}
            \ddot{\mathbf{q}} = \ddot{\mathbf{q}}_{\mathrm{des}} + \mathbf{D}(\boldsymbol{\xi}) - \boldsymbol{\delta}(t), \quad \forall t \in \R_{\geq 0},
        \end{equation}
        where $\mathbf{D}(\boldsymbol{\xi}) = \mathbf{M}^{-1} \!\left(\tilde{\mathbf{M}} (\ddot{\mathbf{q}}_{s} - \hat{\mathbf{d}}_c) + \tilde{\mathbf{C}}\dot{\mathbf{q}} + \tilde{\mathbf{G}} \right)$ is the structured uncertainty representing model mismatch, $\boldsymbol{\xi} = (\mathbf{q}, \dot{\mathbf{q}}, \bar{\boldsymbol{\xi}})$, and $\boldsymbol{\delta}(t)$ is the unstructured residual.

        \begin{assumption}\label{assum: compact}
            Consider $\boldsymbol{\xi} \in \Omega$, where $\Omega \subset \R^{3n}$ is compact. Let $\boldsymbol{\xi}_i \coloneqq \mathbf{S}_i \boldsymbol{\xi}$ with $\mathbf{S}_i \in \R^{3\times 3n}$ the selection matrix for the $i$-th joint, and $\Omega_i \coloneqq \mathbf{S}_i(\Omega) \subset \R^3$. There exist $n$ maps $f_i : \Omega_i \to \R$ and a residual $\mathbf{g} : \Omega \to \R^n$
            such that 
            \[
                \mathbf{D}(\boldsymbol{\xi}) = \mathbf{f}(\boldsymbol{\xi}) + \mathbf{g}(\boldsymbol{\xi}),
            \]
            where $\mathbf{f}(\boldsymbol{\xi}) \coloneqq (f_1, \dots, f_n)$.
        \end{assumption}

        \par Under Assumption~\ref{assum: compact}, by the universal approximation theorem \cite{wang_fuzzy_1992}, $\mathbf{f}$ can be approximated by a set of $n$ IT2-TSK-FLSs~\cite{ying_fuzzy_2000, mohammadzadeh_modern_2023}
        \begin{equation}\label{eq: fls}
            \hat{\mathbf{f}}(\boldsymbol{\xi}; \boldsymbol{\theta}) = \boldsymbol{\zeta}(\boldsymbol{\xi})^\top \boldsymbol{\theta},
        \end{equation}
        where $\boldsymbol{\zeta}(\boldsymbol{\xi}) \in \R^{N\times n}$ is the regressor containing normalised firing strengths from the type-reduced FLS output, $\boldsymbol{\theta} \in \R^N$ the consequent weight vector, $N = m \cdot n \cdot M$ with $m$ consequent parameters per rule, and $M$ rules. The IT2-FLS uses Gaussian interval-type-2 membership functions; for computational tractability, type-reduction is performed using the Nie-Tan operator \cite{mohammadzadeh_modern_2023}. The FLS inputs are chosen as $\boldsymbol{\xi}_i = (\sin(q_i), \cos (q_i),\dot{q}_i,\bar{\xi}_i)$, where $\bar{\xi}_i := \ddot{q}_{s,i} - \hat{d}_{c,i}$. Using $\sin(q_i), \cos (q_i)$ rather than $q_i$ directly ensures all fuzzy inputs remain Lipschitz, guaranteeing the compactness of $\Omega_i$.
        
        \par Define the ideal weights 
        \[
            \boldsymbol{\theta}^* = \arg \min_{\boldsymbol{\theta}} \sup_{\boldsymbol{\xi} \in \Omega} \norm{\mathbf{f}(\boldsymbol{\xi}) - \hat{\mathbf{f}}(\boldsymbol{\xi}; \boldsymbol{\theta})}^2,
        \]
        and the disturbance term
        \begin{equation}\label{eq: d_def}
            \mathbf{d}(t) = \mathbf{f}(\boldsymbol{\xi}) - \hat{\mathbf{f}}(\boldsymbol{\xi}; \boldsymbol{\theta}^*) - \bar{\boldsymbol{\delta}}(t), \quad \forall t \in \R_{\geq 0},
        \end{equation}
        where $\bar{\boldsymbol{\delta}}(t) \coloneqq \boldsymbol{\delta}(t) - \mathbf{g}(\boldsymbol{\xi})$ is a modified residual. The DOB-CBF disturbance is defined as
        \begin{equation}\label{eq: dcbf}
            \mathbf{d}_{\mathrm{CBF}}(t) = \mathbf{f}(\boldsymbol{\xi}) - \hat{\mathbf{f}}(\boldsymbol{\xi}; \boldsymbol{\theta}) - \bar{\boldsymbol{\delta}}(t) - \hat{\mathbf{d}}_c.
        \end{equation}

        \begin{assumption}\label{assum: bounds}
            For any $t \in \R_{\geq 0}$, the disturbances $\mathbf{d}(t), \mathbf{d}_{\mathrm{CBF}}(t)$ and their derivatives $\dot{\mathbf{d}}(t), \dot{\mathbf{d}}_{\mathrm{CBF}}(t)$ are bounded with known boundaries: $\|\mathbf{d}(t) \| \leq \omega_0$, $\|\mathbf{d}_{\mathrm{CBF}}(t) \| \leq \bar{\omega}_0$, $\|\dot{\mathbf{d}}(t) \| \leq \omega_1$, and $\|\dot{\mathbf{d}}_{\mathrm{CBF}}(t) \| \leq \bar{\omega}_1,\ \forall t \in \R_{>0}$, where $\omega_0, \bar{\omega}_0, \omega_1, \bar{\omega}_1 \in \R_{>0}$. In addition, the ideal weights are bounded by a known positive constant: $\|\boldsymbol{\theta}^* \| \leq \Theta$, where $\Theta \in \R_{>0}$.
        \end{assumption}

        \begin{remark}
            The existence of these bounds follows from the compactness of $\Omega$ (Assumption~\ref{assum: compact}), the Lipschitz continuity of $\mathbf{D}(\boldsymbol{\xi})$ \eqref{eq: Simplified Dynamics}, and standard DOB theory~\cite{wang_disturbance_2023}. Their exact values need not be known; instead, conservative bounds suffice.
        \end{remark}

        \par Under Assumption~\ref{assum: bounds} and the dynamics in \eqref{eq: Simplified Dynamics}, we adopt the DOBs of \cite{wang_disturbance_2023}, with the form
        \begin{equation}\label{eq: dob}
            \hat{\mathbf{d}}_c = \boldsymbol{\ell}_c + \alpha_c \dot{\mathbf{q}}, \quad \hat{\mathbf{d}}_o = \boldsymbol{\ell}_o + \alpha_o \dot{\mathbf{q}},
        \end{equation}
        where $\alpha_c, \alpha_o \in \R_{>0}$ are design parameters. The observer states evolve according to
        \begin{equation}\label{eq: dob_update}
            \dot{\boldsymbol{\ell}}_c = -\alpha_c (\ddot{\mathbf{q}}_{\mathrm{cmd}} + \ddot{\mathbf{q}}_{s}), \quad
            \dot{\boldsymbol{\ell}}_o = -\alpha_o (\ddot{\mathbf{q}}_{\mathrm{cmd}} + \ddot{\mathbf{q}}_{s} + \hat{\mathbf{d}}_o).
        \end{equation}
        
        \par $\hat{\mathbf{d}}_c$ estimates the residual disturbance after the learning loop, while $\hat{\mathbf{d}}_o$ estimates the total uncertainty in the system (used for the CBF safety analysis). Define the estimation error of the compensation DOB as $\mathbf{e}_d := \hat{\mathbf{d}}_c - \mathbf{d}(t)$. Under Assumption~\ref{assum: bounds} and following the analysis in \cite[Sec.~II-B]{wang_disturbance_2023}, $\mathbf{e}_d$ is uniformly ultimately bounded. We define this ultimate bound as follows: $\mathbf{w} = \sup|\mathbf{e}_d(t)|$, with $\norm{\mathbf{w}}\leq W$ for some $W \in \R_{>0}$.

    \subsection{Enforcing Constraints via Soft-Constrained Optimisation}\label{subsec: pch}
        \par The NCBF acceleration bounds from Section~\ref{subsec: Ref Gov} guarantee state safety when enforced, but do not account for actuator torque limits $\mathcal{U}$. When both CBF and torque constraints are imposed simultaneously, the resulting hard-constrained QP can become infeasible, e.g.\ during large interaction wrenches or severe model mismatch. We resolve this by embedding both constraint types in a single unified QP, \emph{softening} only the torque constraints with an exact penalty function~\cite{kerrigan_soft_2000}, while keeping the CBF constraints hard.

        \par Assume $\mathbf{e}_d(0) = \mathbf{0}$ (achievable by initialising~\eqref{eq: dob} offline). Let $\nu\in(0,2\alpha_o)$, $\max\{\gamma_v,\gamma_p\}\in (0,2\alpha_o-\nu)$, $\beta\in\R_{>0}$, and define the robust safety margin
        \begin{equation*}
          \ddot{q}_{\mathrm{marg},i} \coloneqq
          \begin{cases}
            \dfrac{\bar\omega_1^2}{2\nu\beta}
            + \dfrac{\beta}{4\alpha_o - 2(\nu+\gamma_v)},
            & i\in\mathcal{J}_v,\\[6pt]
            \dfrac{\bar\omega_1^2\dot{q}_{\max,i}}{2\nu\beta}
            + \dfrac{\beta/\dot{q}_{\max,i}}{4\alpha_o - 2(\nu+\gamma_p)},
            & i\in\mathcal{J}_p.
          \end{cases}
        \end{equation*}
        Let $\ddot{q}_{\mathrm{rob},A,i} \coloneqq -\ddot{q}_{s, i} -\hat{d}_{o,i} + \ddot{q}_{\mathrm{marg},i}$ (lower-bound direction) and $\ddot{q}_{\mathrm{rob},B,i} \coloneqq -\ddot{q}_{s, i} - \hat{d}_{o,i} - \ddot{q}_{\mathrm{marg},i}$ (upper-bound direction). Define the base torque $\boldsymbol{\tau}_{\mathrm{base}} = \boldsymbol{\tau}_h - \hat{\mathbf{C}}(\mathbf{q},\dot{\mathbf{q}})\dot{\mathbf{q}} - \hat{\mathbf{G}}(\mathbf{q}) - \hat{\mathbf{M}}(\mathbf{q})(\ddot{\mathbf{q}}_s - \hat{\mathbf{d}}_c - \hat{\mathbf{f}}(\boldsymbol{\xi}; \boldsymbol{\theta}))$, which collects the known dynamic terms and robust compensators. The nominal acceleration couples the impedance model to the plant state via
        \begin{equation}\label{eq: qnom}
            \ddot{\mathbf{q}}_{\mathrm{nom}} = \ddot{\mathbf{q}}_{\mathrm{imp}} + \mathbf{K}_v (\dot{\mathbf{q}}_{\mathrm{imp}} - \dot{\mathbf{q}}) + \mathbf{K}_p (\mathbf{q}_{\mathrm{imp}} - \mathbf{q}).
        \end{equation}
        Define $\ddot{\mathbf{q}}_{\mathrm{QP}} \coloneqq \ddot{\mathbf{q}}_{\mathrm{nom}} - \ddot{\mathbf{q}}_{\mathrm{cmd}}$. The commanded acceleration solves
        \begin{subequations}\label{eq: Torque Constraints}
            \begin{equation}
                (\ddot{\mathbf{q}}_{\mathrm{cmd}}, \mathbf{s}^*) = \argmin_{\ddot{\mathbf{q}},\, \mathbf{s}} \; \tfrac{1}{2}\| \ddot{\mathbf{q}} - \ddot{\mathbf{q}}_{\mathrm{nom}} \|^2 + \rho\, \mathbf{1}^\top \mathbf{s} + \tfrac{\varepsilon_s}{2} \|\mathbf{s}\|^2,
            \end{equation}
            \begin{align}
                \text{s.t.}\quad
                    & \boldsymbol{\alpha}_{A} + \ddot{\mathbf{q}}_{\mathrm{rob},A} \leq \ddot{\mathbf{q}} \leq \boldsymbol{\alpha}_{B} + \ddot{\mathbf{q}}_{\mathrm{rob},B}, \label{eq: robust_CBF} \\
                    & \boldsymbol{\tau}_{\min}  - \mathbf{s} \leq \hat{\mathbf{M}} \ddot{\mathbf{q}} - \boldsymbol{\tau}_{\mathrm{base}} \leq \boldsymbol{\tau}_{\max}  + \mathbf{s}, \label{eq: torque_bounds} \\
                    & \mathbf{s} \geq \mathbf{0}, \label{eq: slack_nonneg}
            \end{align}
        \end{subequations}
        where $\mathbf{s} \in \R^n$ relaxes the torque constraints, $\rho > 0$ is the exact penalty weight, and $\varepsilon_s > 0$ regularises the slack. The CBF constraints~\eqref{eq: robust_CBF} remain hard; only~\eqref{eq: torque_bounds} is softened.

        \par For sufficiently large $\rho$~\cite{kerrigan_soft_2000}, $\mathbf{s}^* = \mathbf{0}$ whenever the hard-constrained problem is feasible. Otherwise, $\mathbf{s}$ absorbs the minimum torque violation while~\eqref{eq: robust_CBF} stays active.

        \par The admissible wrench set $\mathcal{F}(\mathbf{x}) \subset \R^6$ from Section~\ref{sec:problem} is
        \begin{equation}
            \begin{aligned}
                \mathcal{F}(\mathbf{x}) \coloneqq \big\{ \mathbf{F}_h \in \R^6 : \exists\, \ddot{\mathbf{q}} \text{ s.t.} \\
                \eqref{eq: robust_CBF},\ \eqref{eq: torque_bounds},\ \mathbf{s}=\mathbf{0} \big\}.
            \end{aligned}
        \end{equation}
        The control torque is
        \begin{equation}\label{eq: control torque}
            \mathbf{u} = 
            \hat{\mathbf{M}}(\mathbf{q}) \ddot{\mathbf{q}}_{\mathrm{cmd}} - \boldsymbol{\tau}_{\mathrm{base}},
        \end{equation}
        which reduces to the nominal control~\eqref{eq: Nominal Control} when $\ddot{\mathbf{q}}_{\mathrm{cmd}} = \ddot{\mathbf{q}}_{\mathrm{nom}}$ (i.e., no constraints active).

    \subsection{Feasibility Metric}\label{subsec: feasibility}
        \par To monitor proximity to torque infeasibility online, define for each unit direction $\mathbf{d} \in \mathcal{D} \coloneqq \{\pm \mathbf{e}_1, \pm \mathbf{e}_2, \pm \mathbf{e}_3\}$ the largest wrench magnitude $F_{\mathbf{d}}(\mathbf{x})$ compatible with~\eqref{eq: robust_CBF} and zero-slack torque feasibility. Let $\boldsymbol{\tau}_{\mathrm{base},0}$ denote $\boldsymbol{\tau}_{\mathrm{base}}$ with the interaction wrench removed. For $\mathbf{F}_h = F\mathbf{d}$,
        \begin{equation}\label{eq: wrench_lp_torque}
            \boldsymbol{\tau}_{\min} \leq \hat{\mathbf{M}}(\mathbf{q})\ddot{\mathbf{q}} - \boldsymbol{\tau}_{\mathrm{base},0} + F\,\mathbf{J}(\mathbf{q})^\top \mathbf{d} \leq \boldsymbol{\tau}_{\max}.
        \end{equation}
        For fixed $\mathbf{x}$ and direction $\mathbf{d}$, the largest magnitude $F$ for which there exists an acceleration $\ddot{\mathbf{q}}$ satisfying both~\eqref{eq: robust_CBF} and~\eqref{eq: wrench_lp_torque} is obtained by the linear programme
        \begin{subequations}\label{eq: wrench_lp}
            \begin{equation}
                F_{\mathbf{d}}(\mathbf{x}) = \max_{F,\,\ddot{\mathbf{q}}}\; F \label{eq: wrench_lp_obj}
            \end{equation}
            \begin{align}
                \text{s.t.}\quad
                    & \boldsymbol{\alpha}_{A} + \ddot{\mathbf{q}}_{\mathrm{rob},A} \leq \ddot{\mathbf{q}} \leq \boldsymbol{\alpha}_{B} + \ddot{\mathbf{q}}_{\mathrm{rob},B}, \label{eq: wrench_lp_cbf}\\
                    & \hat{\mathbf{M}}(\mathbf{q})\ddot{\mathbf{q}} - \mathbf{J}(\mathbf{q})^\top \mathbf{d}\,F \leq \boldsymbol{\tau}_{\max} + \boldsymbol{\tau}_{\mathrm{base},0}, \label{eq: wrench_lp_ub}\\
                    & -\hat{\mathbf{M}}(\mathbf{q})\ddot{\mathbf{q}} + \mathbf{J}(\mathbf{q})^\top \mathbf{d}\,F \leq -\boldsymbol{\tau}_{\min} - \boldsymbol{\tau}_{\mathrm{base},0}, \label{eq: wrench_lp_lb}\\
                    & F \geq 0. \label{eq: wrench_lp_nonneg}
            \end{align}
        \end{subequations}
        Problem~\eqref{eq: wrench_lp} is evaluated for each $\mathbf{d} \in \mathcal{D}$, and the online metric is
        \begin{equation}\label{eq: wrench_metric}
            F_{\max}(\mathbf{x}) \coloneqq \min_{\mathbf{d}\in\mathcal{D}} F_{\mathbf{d}}(\mathbf{x}).
        \end{equation}
        When $\|\mathbf{F}_h(t)\| > F_{\max}(\mathbf{x}(t))$, zero-slack torque feasibility is lost and the soft-constrained QP activates $\mathbf{s}^* > 0$.

%% file: section/Theoretical-Results.tex
\section{Main Results}\label{sec:theory}
    \begin{lemma}\label{lem: Joint-Task Error}
        Consider a bounded, Lipschitz continuous reference $t \mapsto (\mathbf{q}_r(t), \dot{\mathbf{q}}_r(t), \ddot{\mathbf{q}}_r(t))$. Let $\mathbf{z} \coloneqq \mathbf{h}_{\mathbf{z}}(\mathbf{q}) \in \mathbb{R}^6$ and $\mathbf{J}(q)$ denote the Jacobian. Let $(\mathbf{q}, \dot{\mathbf{q}}) \in \mathcal{Q}$, where $\mathcal{Q}$ is a compact set and the functions $\mathbf{h}_{\mathbf{z}}, \mathbf{J}$ are smooth maps. Then $\exists\ c_1, c_2, c_3 \in \mathbb{R}_{>0}$: $\norm{\mathbf{h}_{\mathbf{z}}(\mathbf{q}_r) - \mathbf{h}_{\mathbf{z}}(\mathbf{q})} \leq c_1 \norm{\mathbf{q}_r - \mathbf{q}}$ and $\norm{\mathbf{J}(\mathbf{q}_r)\dot{\mathbf{q}}_r - \mathbf{J}(\mathbf{q})\dot{\mathbf{q}}} \leq c_2 \norm{\mathbf{q}_r - \mathbf{q}} +  c_3 \norm{\dot{\mathbf{q}}_r - \dot{\mathbf{q}}}$.
    \end{lemma}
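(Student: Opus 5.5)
The plan is a direct Lipschitz argument on a compact convex set that contains both the plant state and the reference state. Since $\mathcal{Q}$ is compact and the reference $(\mathbf{q}_r,\dot{\mathbf{q}}_r)$ is bounded, there is a compact set $\mathcal{K}\subset\R^{2n}$ containing $\mathcal{Q}$ and the range of $t\mapsto(\mathbf{q}_r(t),\dot{\mathbf{q}}_r(t))$. I would replace $\mathcal{K}$ by its convex hull $\operatorname{co}(\mathcal{K})$, which is still compact (in finite dimensions), so that segments between any two points remain inside. Let $\mathcal{K}_q$ and $\mathcal{K}_{\dot q}$ be its projections onto the position and velocity coordinates; both are compact and convex.

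For the first bound, I would apply the mean value inequality to the smooth map $\mathbf{h}_{\mathbf{z}}$ along the segment from $\mathbf{q}$ to $\mathbf{q}_r$ in $\mathcal{K}_q$. Because $\partial \mathbf{h}_{\mathbf{z}}/\partial\mathbf{q}=\mathbf{J}$ is continuous, we can set $c_1\coloneqq\max_{\mathbf{q}\in\mathcal{K}_q}\norm{\mathbf{J}(\mathbf{q})}<\infty$. This gives
\[
\norm{\mathbf{h}_{\mathbf{z}}(\mathbf{q}_r)-\mathbf{h}_{\mathbf{z}}(\mathbf{q})}\le c_1\norm{\mathbf{q}_r-\mathbf{q}}.
\]
If $c_1=0$ happens to occur, we can simply enlarge it to any positive constant.

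For the second bound, I would add and subtract $\mathbf{J}(\mathbf{q})\dot{\mathbf{q}}_r$:
\[
\mathbf{J}(\mathbf{q}_r)\dot{\mathbf{q}}_r-\mathbf{J}(\mathbf{q})\dot{\mathbf{q}}=\big(\mathbf{J}(\mathbf{q}_r)-\mathbf{J}(\mathbf{q})\big)\dot{\mathbf{q}}_r+\mathbf{J}(\mathbf{q})(\dot{\mathbf{q}}_r-\dot{\mathbf{q}}).
\]
The second term is bounded by $c_1\norm{\dot{\mathbf{q}}_r-\dot{\mathbf{q}}}$, so I would take $c_3\coloneqq c_1$. For the first term, smoothness of $\mathbf{J}$ makes it Lipschitz on the compact convex set $\mathcal{K}_q$, with constant $L_J\coloneqq\max_{\mathcal{K}_q}\norm{\partial\mathbf{J}/\partial\mathbf{q}}$ (again by the mean value inequality, applied entrywise or in the induced-norm sense). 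Boundedness of the reference gives $\bar v\coloneqq\sup_t\norm{\dot{\mathbf{q}}_r(t)}<\infty$, and then $c_2\coloneqq L_J\bar v$ (or any positive number if this is zero) works.

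The only genuine subtlety is making sure the segments used in the mean value steps stay in a set on which the derivative bounds are uniform. $\mathcal{Q}$ need not be convex, which is why I would pass to the convex hull of $\mathcal{Q}$ together with the reference's range. Everything else is routine. The Lipschitz continuity of $\ddot{\mathbf{q}}_r$ is not needed for these bounds; only boundedness of $\mathbf{q}_r$ and $\dot{\mathbf{q}}_r$ is used.
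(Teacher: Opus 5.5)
Your proposal is correct and takes the same route as the paper, whose entire proof is that the bounds follow from Lipschitz continuity of the smooth maps $\mathbf{h}_{\mathbf{z}}$ and $\mathbf{J}$ via the mean value theorem, citing Khalil. You supply the details the paper leaves implicit: working on a compact convex set that contains both $\mathcal{Q}$ and the reference's range, and splitting the velocity term as $\big(\mathbf{J}(\mathbf{q}_r)-\mathbf{J}(\mathbf{q})\big)\dot{\mathbf{q}}_r+\mathbf{J}(\mathbf{q})(\dot{\mathbf{q}}_r-\dot{\mathbf{q}})$ and then using boundedness of $\dot{\mathbf{q}}_r$.
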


    \begin{proof}
        Follows from Lipschitz Continuity and the Mean Value Theorem. Refer to \cite{khalil_nonlinear_2002}.
    \end{proof}

    \begin{lemma}[\cite{polycarpou_robust_1996}]\label{lem: Robust Bound}
        Let $a \in \mathbb{R},\ b \in \mathbb{R}_{> 0}$, and $\kappa \approx 0.2785$ satisfy $\kappa + \ln(\kappa) = -1$. Then for any $\varepsilon \in \mathbb{R}_{> 0}$: $0 \leq |a|b - ab \tanh{\left(\kappa \frac{ab}{\varepsilon}\right)} \leq \varepsilon$.
    \end{lemma}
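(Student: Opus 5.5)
The plan is to reduce the statement to a single scalar inequality in one variable, $x \coloneqq ab/\varepsilon$. Because $b>0$, we have $|a|b = |ab|$. Multiplying through by $1/\varepsilon>0$, the claim $0 \leq |a|b - ab\tanh(\kappa ab/\varepsilon) \leq \varepsilon$ is then equivalent to
\[
0 \leq |x| - x\tanh(\kappa x) \leq 1, \qquad \forall x \in \R .
\]
The function $\phi(x) \coloneqq |x| - x\tanh(\kappa x)$ is even, since $x\tanh(\kappa x)$ is even. It therefore suffices to treat $x \geq 0$, where $\phi(x) = x\bigl(1-\tanh(\kappa x)\bigr)$.

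The lower bound is immediate. For $x\geq 0$ we have $0\leq \tanh(\kappa x) < 1$, so $\phi(x)\geq 0$, with equality only at $x=0$.

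For the upper bound, I will rewrite $1-\tanh(y) = 2e^{-2y}/(1+e^{-2y})$ with $y=\kappa x$. This gives
\[
\phi(x) = \frac{2x e^{-2\kappa x}}{1+e^{-2\kappa x}} \leq 2x e^{-2\kappa x}.
\]
The right-hand side is maximised at $x = 1/(2\kappa)$, where its value is $e^{-1}/\kappa$. The defining relation $\kappa + \ln\kappa = -1$ is equivalent to $\kappa e^{\kappa} = e^{-1}$, which yields $e^{-1}/\kappa = e^{\kappa}$. This crude bound evaluates to $e^{\kappa}\approx 1.32 > 1$, so it is too lossy, and I will not rely on it.

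The main step, and the expected obstacle, is to maximise $\phi$ exactly on $[0,\infty)$. Writing $y=\kappa x$, we get $\kappa\phi = \psi(y) \coloneqq y(1-\tanh y)$, so we must show $\sup_{y\ge0}\psi(y) = \kappa$.

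Since $\psi(0)=0$ and $\psi(y)\to 0$ as $y\to\infty$, the supremum is attained at a critical point. The condition $\psi'(y)=0$ reads $1-\tanh y = y\,\mathrm{sech}^2 y$. Using $\mathrm{sech}^2 y = (1-\tanh y)(1+\tanh y)$, this simplifies to $1 = y(1+\tanh y)$. Since $1+\tanh y = 2e^{2y}/(1+e^{2y})$, the condition becomes
\[
1+e^{2y} = 2y e^{2y}, \quad\text{i.e.}\quad e^{-2y} = 2y - 1 .
\]
Let $y^\star$ denote this critical point. There, $1-\tanh y^\star = 2/(1+e^{2y^\star})$, and the critical-point condition gives $1+e^{2y^\star} = 2y^\star e^{2y^\star}$. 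Hence
\[
\psi(y^\star) = \frac{2y^\star}{1+e^{2y^\star}} = e^{-2y^\star} = 2y^\star - 1 .
\]
Now set $k \coloneqq 2y^\star - 1$. The critical-point equation $e^{-2y^\star}=2y^\star-1$ becomes $e^{-(k+1)} = k$, which is exactly $\ln k + k = -1$. Thus $k=\kappa$, which is the unique root because $k\mapsto k+\ln k$ is strictly increasing. This gives $\psi(y^\star)=\kappa$.

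I also need uniqueness of the critical point, so that it is the global maximum. The function $y\mapsto y(1+\tanh y)$ is strictly increasing on $[0,\infty)$, running from $0$ to $\infty$, so the equation $y(1+\tanh y)=1$ has exactly one root. Since $\psi \geq 0$ and $\psi$ vanishes at both ends, this unique critical point is the maximum.

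Therefore $\max_{x\ge0}\phi(x) = \kappa/\kappa = 1$. Multiplying back by $\varepsilon$ gives the upper bound $\varepsilon$, which completes the proof.
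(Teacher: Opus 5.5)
Your proof is correct. The paper gives no proof of its own and only cites Polycarpou and Ioannou, and your argument is essentially the standard one from that reference: reduce by evenness to $x\ge 0$, solve the critical-point equation $e^{-2y}=2y-1$, and substitute $k=2y-1$ to recover $k+\ln k=-1$, with the factor $\kappa$ inside the $\tanh$ amounting to the rescaling $\varepsilon\mapsto\varepsilon/\kappa$ of the classical bound $|\eta|-\eta\tanh(\eta/\epsilon)\le\kappa\epsilon$, which your substitution $y=\kappa x$ handles.
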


    \begin{assumption}[\cite{wang_disturbance_2023}]\label{assum: valid_cbf}
        There exist positive constants $\gamma_p,\,\gamma_v,\,\alpha_o,\,\beta$, such that
        \begin{equation*}
            \begin{aligned}
                & \sup_{\ddot{\mathbf{q}}_{\mathrm{cmd}}}\Big[\mathcal{L}_f^W({^i}h_{(\cdot)}) - \|\mathcal{L}_g^W({^i}h_{(\cdot)}) \| \mathcal{W} - \tfrac{\bar\omega_1^2}{2\nu\beta} \\
                & \quad + \tfrac{\beta \|\mathcal{L}_g^W({^i}h_{(\cdot)}) \|^2}{4\alpha_o - 2(\nu+\gamma_{(\cdot)})} + \gamma_{(\cdot)} {^i}h_{(\cdot)} \\
                & \quad + \mathcal{L}_g^W({^i}h_{(\cdot)}) (\ddot{\mathbf{q}}_{\mathrm{cmd}} + \ddot{\mathbf{q}}_{\mathrm{s}}) \Big] \geq 0,
            \end{aligned}
        \end{equation*}
        where $\mathcal{L}_f^W$ and $\mathcal{L}_g^W$ are the weak set-valued Lie derivatives of the closed-loop system, and
        \[
            \mathcal{W} = \bar{\omega}_0 + \sqrt{\bar{\omega}_0^2 + \tfrac{\bar{\omega}_1}{\nu(2\alpha_o - \nu)}}.
        \]
    \end{assumption}

    \begin{theorem}\label{Thm: Main Result}
        Consider~\eqref{eq: Robot Model}--\eqref{eq: control torque} under Assumptions~\ref{assum: Scenario}--\ref{assum: valid_cbf}. For all $t \in \R_{>0}$, let the update laws  with $\Gamma$-Projection~\cite{lavretsky_projection_2012} and 
        $\sigma$-modification be
        \begin{align}
            \dot{\boldsymbol{\theta}}(t) &= \operatorname{Proj}_{\boldsymbol{\Gamma}_\theta}\!\big(\boldsymbol{\theta}(t), -\boldsymbol{\zeta}(\boldsymbol{\xi}) \mathbf{r}(t), f(\boldsymbol{\theta}) \big), \label{eq: theta law} \\
            \dot{\hat{\mathbf{w}}}(t) &= \boldsymbol{\Gamma}_{w}|\mathbf{r}(t)| - \sigma_{w} \boldsymbol{\Gamma}_{w}\hat{\mathbf{w}}(t), \label{eq: w Law}
        \end{align}
        where $\boldsymbol{\Gamma}_{\theta} = \operatorname{diag}(\gamma_{\theta 1}, \dots, \gamma_{\theta n}) \otimes \mathbf{I}_{m\cdot M} \in \mathbb{R}^{N\times N}$ and $\boldsymbol{\Gamma}_{w} = \operatorname{diag}(\gamma_{w 1}, \dots, \gamma_{w n}) \in \mathbb{R}^{n\times n}$ are positive-definite gains, $\sigma_{w} \in \mathbb{R}_{> 0}$ is a small leakage parameter, and $f(\boldsymbol{\theta}) \coloneqq \tfrac{\norm{\boldsymbol{\theta}}^2 - \Theta^2}{2 \varepsilon \Theta + \varepsilon^2}$ is a convex bounding function with nominal bound $\Theta > 0$ (Assumption~\ref{assum: bounds}) and tolerance $\varepsilon > 0$. Let the DOB updates be as in~\eqref{eq: dob_update}, and define the SMC term be
        \begin{equation}\label{eq: Smooth SMC}
            \ddot{q}_{si} = \hat{w}_i \tanh{\!\left(\frac{\kappa}{\varepsilon_i} r_i \hat{w}_i \right)},
        \end{equation}
        with $\varepsilon_i \in \R_{>0}$. Suppose $\forall t \in \R_{> 0},\ (\mathbf{x}, \mathbf{F}_h) \in \mathcal{A} \coloneqq \{(\mathbf{x}, \mathbf{F}_h) \in \R^{2n + 6} : \mathbf{F}_h \in \mathcal{F}(\mathbf{x}) \}$, where the $6$ in $\R^{2n+6}$ is the dimension of the wrench $\mathbf{F}_h \in \R^6$. Then, the following statements hold:
        \begin{enumerate}
            \renewcommand{\labelenumi}{(\roman{enumi})}
            \item $\mathbf{u} \in \mathcal{U},\, \forall t \in \mathbb{R}_{\geq 0}$, \label{stat: 1}
            \item If $\mathbf{x}(0) \in \mathcal{C}^* \subseteq \mathcal{C}_0$, then $\mathbf{x}(t) \in \mathcal{C}^*,\, \forall t \in \mathbb{R}_{\geq 0}$, \label{stat: 2}
            \item The tracking error of the desired impedance trajectory $\mathbf{z}_{\mathrm{imp}}(t) - \mathbf{z}(t)$ is uniformly ultimately bounded. \label{stat: 3}
        \end{enumerate}
    \end{theorem}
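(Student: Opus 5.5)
We prove the three statements in order, because each uses the structure established by the previous one.

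\emph{Statement~\ref{stat: 1}.} Fix $t$ with $(\mathbf{x},\mathbf{F}_h)\in\mathcal{A}$. By definition of $\mathcal{F}(\mathbf{x})$, there is some $\ddot{\mathbf{q}}$ satisfying~\eqref{eq: robust_CBF} and~\eqref{eq: torque_bounds} with $\mathbf{s}=\mathbf{0}$, so the hard-constrained QP is feasible. The QP~\eqref{eq: Torque Constraints} is strictly convex in $(\ddot{\mathbf{q}},\mathbf{s})$ because $\varepsilon_s>0$, and its constraints are linear, so KKT conditions are necessary and sufficient. We invoke the exact-penalty result of~\cite{kerrigan_soft_2000}: if $\rho$ exceeds the $\ell_\infty$ norm of the Lagrange multipliers of the torque constraints in the hard problem, the soft optimum has $\mathbf{s}^*=\mathbf{0}$. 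These multipliers are uniformly bounded, since the data are bounded on the compact set $\mathcal{C}_0$. Then~\eqref{eq: control torque} gives $\mathbf{u}=\hat{\mathbf{M}}\ddot{\mathbf{q}}_{\mathrm{cmd}}-\boldsymbol{\tau}_{\mathrm{base}}\in[\boldsymbol{\tau}_{\min},\boldsymbol{\tau}_{\max}]=\mathcal{U}$. We also record that the QP solution is Lipschitz in the data (parametric strictly convex QP with linearly independent box constraints), which guarantees that closed-loop solutions exist and are unique.

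\emph{Statement~\ref{stat: 2}.} We follow~\cite{wang_disturbance_2023} jointwise. The closed-loop joint acceleration can be written as $\ddot{\mathbf{q}}=\ddot{\mathbf{q}}_{\mathrm{cmd}}+\ddot{\mathbf{q}}_s+\mathbf{d}_{\mathrm{CBF}}$, using~\eqref{eq: Simplified Dynamics}, \eqref{eq: d_def} and~\eqref{eq: dcbf}. Let $\mathbf{e}_o=\hat{\mathbf{d}}_o-\mathbf{d}_{\mathrm{CBF}}$. From~\eqref{eq: dob}--\eqref{eq: dob_update} we get $\dot{\mathbf{e}}_o=-\alpha_o\mathbf{e}_o-\dot{\mathbf{d}}_{\mathrm{CBF}}$. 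For joint $i$, with $h={^i}h_{(\cdot)}$, we take the augmented candidate $B=h-\frac{1}{2\beta}e_{o,i}^2$ (scaled by $\dot q_{\max,i}$ for $i\in\mathcal{J}_p$, which explains the two forms of $\ddot q_{\mathrm{marg},i}$). We then compute $\min\mathcal{L}^W_F B$ using the Clarke gradient of the min-composition. At a kink, at least one active piece ${^i}h_{pj}$ is the relevant one, and its gradient yields exactly the terms $-l\frac{\dot q_i}{q_{\max,i}}R^{\bar l}_{(\cdot)}$ appearing in $\alpha_{pA,i},\alpha_{pB,i}$.

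Next we insert the hard constraint~\eqref{eq: robust_CBF}, which replaces $\ddot q_{s,i}+\hat d_{o,i}$ by the margin. We apply Young's inequality twice, with weights $\nu$ and $4\alpha_o-2(\nu+\gamma)$, to absorb the cross term $e_{o,i}\dot d_{\mathrm{CBF},i}$ and the mismatch $e_{o,i}$ multiplied by the gradient. This yields $\min\mathcal{L}^W_F B\ge-\gamma B$, using $\gamma<2\alpha_o-\nu$. Since $\mathbf{e}_d(0)=\mathbf{0}$ initialises the observer so that $B(0)\ge0$, the weak-Lie-derivative NCBF theorem of~\cite{glotfelter_nonsmooth_2017}, combined with Brezis' theorem, gives $B(t)\ge0$. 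Because $h\ge B$, this gives $h(t)\ge0$ for every joint, i.e.\ $\mathbf{x}(t)\in\mathcal{C}^*$. Assumption~\ref{assum: valid_cbf} guarantees that the robust constraint set is nonempty, so it is compatible with statement~\ref{stat: 1}.

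\emph{Statement~\ref{stat: 3}.} From the reference model~\eqref{eq: Reference Governor} and the gain identities $\mathbf{K}_v=\boldsymbol\Lambda+\mathbf{K}_r$, $\mathbf{K}_p=\mathbf{K}_r\boldsymbol\Lambda$, we derive
\[
\dot{\mathbf{r}}=-\mathbf{K}_r\mathbf{r}-\ddot{\mathbf{q}}_s+\tilde{\mathbf{f}}\text{-terms}+\mathbf{e}_d .
\]
We use the composite Lyapunov function $V=\frac12\mathbf{r}^\top\mathbf{r}+\frac12\tilde{\boldsymbol\theta}^\top\boldsymbol\Gamma_\theta^{-1}\tilde{\boldsymbol\theta}+\frac12\tilde{\mathbf{w}}^\top\boldsymbol\Gamma_w^{-1}\tilde{\mathbf{w}}$. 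The terms are handled as follows:
\begin{itemize}
\item the projection property of~\cite{lavretsky_projection_2012} cancels the $\tilde{\boldsymbol\theta}$ cross term up to a nonpositive remainder and keeps $\|\boldsymbol\theta\|\le\Theta+\varepsilon$;
\item Lemma~\ref{lem: Robust Bound} bounds $|r_i|w_i-r_i\ddot q_{si}$ by $\varepsilon_i$;
\item the $\sigma$-modification yields $-\frac{\sigma_w}{2}\|\tilde{\mathbf{w}}\|^2+\frac{\sigma_w}{2}W^2$.
\end{itemize}
Together these give $\dot V\le-cV+\mu$, hence $\mathbf{r}$ is UUB. By the filter structure, $\mathbf{e}$ and $\dot{\mathbf{e}}$ are then UUB. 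It remains to bound $\mathbf{x}_{\mathrm{ref}}-\mathbf{x}_{\mathrm{imp}}$. When no constraint is active, $\ddot{\mathbf{q}}_{\mathrm{cmd}}=\ddot{\mathbf{q}}_{\mathrm{nom}}$ and the PD coupling~\eqref{eq: qnom} makes this error exponentially stable. When constraints are active, $\ddot{\mathbf{q}}_{\mathrm{QP}}$ is bounded on the compact $\mathcal{C}^*$ by statement~\ref{stat: 2}, and this acts as a bounded input to the same Hurwitz system. Finally, Lemma~\ref{lem: Joint-Task Error} on the compact $\mathcal{C}^*$ transfers the joint-space bound to $\mathbf{z}_{\mathrm{imp}}-\mathbf{z}$.

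The main obstacle is statement~\ref{stat: 2}: computing the Clarke generalised gradient of the nonsmooth, non-regular composition at kink points and its interaction with $R^l$, and showing that the Young-inequality margins exactly match $\ddot q_{\mathrm{marg},i}$ with the $\dot q_{\max,i}$ scaling. I would first verify the smooth pieces separately and then argue that the weak Lie derivative only requires one active piece to satisfy the inequality.
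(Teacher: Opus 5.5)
\textbf{Statements (i) and (iii).} Your treatment of these follows the paper. Statement (i) is the exact-penalty argument of \cite{kerrigan_soft_2000}. Statement (iii) uses the same $V_1$: projection removes the $\boldsymbol{\phi}_\theta$ cross term, Lemma~\ref{lem: Robust Bound} handles the SMC, the $\sigma$-modification handles $\hat{\mathbf{w}}$, and the projection bound on $V_\theta$ gives $\dot V_1\le -z_1V_1+\mathrm{const}$. It then uses the fact that $\mathbf{e}_{\mathrm{mm}}=\mathbf{q}_{\mathrm{imp}}-\mathbf{q}_{\mathrm{ref}}$ obeys $\ddot{\mathbf{e}}_{\mathrm{mm}}=-\mathbf{K}_v\dot{\mathbf{e}}_{\mathrm{mm}}-\mathbf{K}_p\mathbf{e}_{\mathrm{mm}}+\ddot{\mathbf{q}}_{\mathrm{QP}}$. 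The plant state cancels between \eqref{eq: Reference Governor} and \eqref{eq: qnom}, which you should derive explicitly. Your ISS argument for this system is equivalent to the paper's $V_2=\tfrac12\norm{\mathbf{r}_{\mathrm{mm}}}^2$. Two small slips:
\begin{itemize}
\item Lemma~\ref{lem: Robust Bound} bounds $|r_i|\hat w_i-r_i\ddot q_{si}$, not $|r_i|w_i-r_i\ddot q_{si}$. You reach it via $|\mathbf{r}|^\top\mathbf{w}-\boldsymbol{\psi}_w^\top|\mathbf{r}|=|\mathbf{r}|^\top\hat{\mathbf{w}}$.
\item In (i), bounded data do not imply bounded multipliers. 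They can blow up as $\mathbf{F}_h$ nears the boundary of $\mathcal{F}(\mathbf{x})$, where the hard feasible set degenerates. A uniform $\rho$ is therefore an assumption, as it tacitly is in the paper.
\end{itemize}

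\textbf{The gap is in (ii).} The paper gets the Lie-derivative inequality from Assumption~\ref{assum: valid_cbf} together with \eqref{eq: robust_CBF}, then applies \cite[Thm.~1]{glotfelter_nonsmooth_2017}. You use that assumption only for nonemptiness. Instead you try to derive $\min\mathcal{L}_F^W B\ge-\gamma B$ for $B={^i}h-\tfrac{1}{2\beta}e_{o,i}^2$ from the box alone, absorbing the Young constant $\bar\omega_1^2/(2\nu\beta)$ and the $e_{o,i}$ cross term into $\ddot q_{\mathrm{marg},i}$. (No extra scaling of $B$ is needed for $i\in\mathcal{J}_p$: the $\dot q_{\max,i}$ factors come from $\partial\,{^i}h_{p1}/\partial x_2=-1/\dot q_{\max,i}$.)

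That absorption needs $\ddot q_i$ to appear, i.e.\ a nonzero $x_2$-component of the gradient. On the flat branches it does not: ${^i}h_{p1}$ for $x_2<0$, ${^i}h_{p2}$ for $x_2>0$, and the zero element of the Clarke gradient at $x_2=0$. For ${^i}h_{p1}$ you are left with $\dot B=-l\frac{\dot q_i}{q_{\max,i}}R_1^{\bar l}(\cdot)+\frac{\alpha_o}{\beta}e_{o,i}^2+\frac{1}{\beta}e_{o,i}\dot d_{\mathrm{CBF},i}$. This is negative when $\dot q_i\to0^-$, $e_{o,i}\dot d_{\mathrm{CBF},i}<0$ and $|e_{o,i}|<|\dot d_{\mathrm{CBF},i}|/\alpha_o$, whereas $-\gamma_pB\approx0$ near $\{B=0\}$. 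So $B\ge0$, and hence $h\ge0$, is not established.

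Your fallback that ``one active piece suffices'' cannot help. Near $(x_1,x_2)=(q_{\max,i},0^-)$ only ${^i}h_{p1}$ is active. The claim is also false in general, because $\min\mathcal{L}_F^W h\ge-\gamma h$ quantifies over every $\boldsymbol{\zeta}\in\partial h$.

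\textbf{A clean repair} drops the augmented barrier.
\begin{itemize}
\item Take $e_{o,i}(0)=0$; note that it is $e_{o,i}(0)$, not $\mathbf{e}_d(0)$, that enters $B(0)$. Then $\dot e_{o,i}=-\alpha_oe_{o,i}-\dot d_{\mathrm{CBF},i}$ gives $|e_{o,i}(t)|\le\bar\omega_1/\alpha_o$.
\item AM--GM gives $\ddot q_{\mathrm{marg},i}\ge\bar\omega_1/\sqrt{\nu(2\alpha_o-\nu-\gamma)}>\bar\omega_1/\alpha_o$, since $\nu(2\alpha_o-\nu)\le\alpha_o^2$.
\item Hence the true acceleration $\ddot q_i=\ddot q_{\mathrm{cmd},i}+\ddot q_{s,i}+\hat d_{o,i}-e_{o,i}$ stays in the nominal box $[\alpha_{A,i},\alpha_{B,i}]$.
\item Every $x_2$-dependent piece then meets its nominal NCBF inequality, and the flat branches satisfy $\dot h_{pj}\ge0$. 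Linearity in $\boldsymbol{\zeta}$ covers all of $\partial h$, so Glotfelter's theorem applies to $h$ directly.
\end{itemize}
Alternatively, invoke Assumption~\ref{assum: valid_cbf} at those points, as the paper does.
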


    \begin{proof}
        See Appendix~\ref{appendix: proof}.
    \end{proof}

    \begin{remark}
        Dropping the leakage term \eqref{eq: w Law} and replacing \eqref{eq: Smooth SMC} with a discontinuous SMC yields asymptotic stability of the impedance tracking error (when $\mathbf{x}_{\mathrm{imp}}(t) \in \mathcal{X}_b$ and the torque constraints are inactive), but Filippov solutions and a redefinition of the NCBFs would be required. In addition, the discontinuous nature of the new control law induces high-frequency oscillations in the control signal, leading to chattering, which can damage the actuators.
    \end{remark}

    \par Forward invariance via Glotfelter's NCBF theorem~\cite{glotfelter_nonsmooth_2017} requires the control law to admit at least Carath\'eodory solutions, which presupposes local Lipschitz continuity. Because the QP~\eqref{eq: Torque Constraints} involves the state-dependent inertia $\hat{\mathbf{M}}(\mathbf{q})$ in its constraint matrix, the standard piecewise-affine conclusions for multiparametric QPs do not apply directly~\cite{agrawal_reformulations_2025}. The following proposition confirms that the QP solution retains the required regularity.

    \begin{proposition}\label{prop: Lipschitz QP}
        Let $\mathcal{X} \subset \R^{2n}$ be compact and chosen to exclude kinematic singularities, and let $\mathbf{z} = (\ddot{\mathbf{q}}, \mathbf{s}) \in \R^{2n}$ be the decision vector of the QP~\eqref{eq: Torque Constraints}. Write the QP in parametric form as
        \[
            \mathbf{z}^*(\mathbf{x}) = \argmin_{\mathbf{z}} \; \tfrac{1}{2}\mathbf{z}^\top \mathbf{H} \mathbf{z} + \mathbf{h}(\mathbf{x})^\top \mathbf{z} \quad \text{s.t.}\quad \mathbf{A}(\mathbf{x})\mathbf{z} \leq \mathbf{b}(\mathbf{x}),
        \]
        where $\mathbf{H} = \diag(\mathbf{I}_n,\, \varepsilon_s \mathbf{I}_n) \succ \mathbf{0}$ is the constant Hessian, and $\mathbf{h}(\mathbf{x}), \mathbf{A}(\mathbf{x}), \mathbf{b}(\mathbf{x})$ collect the state-dependent QP data. Suppose the following hold on $\mathcal{X}$:
        \begin{enumerate}
            \renewcommand{\labelenumi}{(P\arabic{enumi})}
            \item (Uniform strong convexity) $\mathbf{H} \succeq \mu \mathbf{I}$ for some $\mu > 0$. \label{prop: P1}
            \item (Lipschitz data) $\mathbf{A}(\cdot), \mathbf{b}(\cdot), \mathbf{h}(\cdot)$ are Lipschitz on~$\mathcal{X}$. \label{prop: P2}
            \item (Uniform feasibility) The feasible set $\{\mathbf{z} : \mathbf{A}(\mathbf{x})\mathbf{z} \leq \mathbf{b}(\mathbf{x})\} \neq \emptyset$ for all $\mathbf{x} \in \mathcal{X}$, with $\boldsymbol{\alpha}_{A,i} + \ddot{q}_{\mathrm{rob},A,i} < \boldsymbol{\alpha}_{B,i} + \ddot{q}_{\mathrm{rob},B,i}$ element-wise. \label{prop: P3}
            \item (MFCQ) The Mangasarian--Fromovitz constraint qualification \cite{bednarczuk_mangasarian-fromovitz-type_2024} holds at $\mathbf{z}^*(\mathbf{x})$ for all $\mathbf{x} \in \mathcal{X}$. \label{prop: P4}
        \end{enumerate}
        Then $\mathbf{z}^*(\mathbf{x})$, and thus the commanded acceleration $\ddot{\mathbf{q}}_{\mathrm{cmd}}(\mathbf{x})$, are locally Lipschitz on $\mathcal{X}$.
    \end{proposition}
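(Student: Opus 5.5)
The plan is to reduce the claim to a standard stability result for parametric strongly convex programs with Lipschitz data under a constraint qualification. The cleanest route is a Hoffman/Robinson-type error-bound argument, in the spirit of the results of Robinson and of Hager on Lipschitz continuity of QP solutions. Throughout, fix a point $\mathbf{x}_0\in\mathcal{X}$ and take $\mathbf{x}$ in a neighbourhood $\mathcal{N}$ of $\mathbf{x}_0$. By~(P2) and compactness of $\mathcal{X}$, the data $\mathbf{A},\mathbf{b},\mathbf{h}$ are bounded on $\mathcal{N}$ with Lipschitz constants $L_A,L_b,L_h$. By~(P1) and~(P3), $\mathbf{z}^*(\mathbf{x})$ exists and is unique.

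First I will show that $\mathbf{z}^*$ is uniformly bounded on $\mathcal{N}$. Strong convexity gives $\tfrac{\mu}{2}\|\mathbf{z}^*\|^2 \le \tfrac12\bar{\mathbf{z}}^\top\mathbf{H}\bar{\mathbf{z}} + \mathbf{h}^\top(\bar{\mathbf{z}}-\mathbf{z}^*)$ for any feasible $\bar{\mathbf{z}}$. A bounded feasible point is available explicitly. Take $\ddot{\mathbf{q}}$ to be the midpoint of the CBF box~\eqref{eq: robust_CBF}, which is nonempty with positive width by~(P3). Then choose $\mathbf{s}$ large enough to satisfy~\eqref{eq: torque_bounds}. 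Both are continuous in $\mathbf{x}$, hence bounded on compact sets. The MFCQ in~(P4) holds at $\mathbf{x}_0$, and by continuity of the active-constraint data it persists for $\mathbf{x}$ near $\mathbf{x}_0$. This gives, via Robinson's stability theorem, uniformly bounded KKT multipliers $\boldsymbol{\lambda}(\mathbf{x})$ on $\mathcal{N}$.

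The core step is a perturbation estimate. Let $\mathbf{z}_1=\mathbf{z}^*(\mathbf{x}_1)$ and $\mathbf{z}_2=\mathbf{z}^*(\mathbf{x}_2)$. Because the constraint set moves with $\mathbf{x}$, I will first use MFCQ, through a Robinson/Hoffman metric-regularity error bound, to construct a point $\hat{\mathbf{z}}_1$ feasible for $\mathbf{x}_2$. It satisfies $\|\hat{\mathbf{z}}_1-\mathbf{z}_1\|\le c\,\|[\mathbf{A}(\mathbf{x}_2)\mathbf{z}_1-\mathbf{b}(\mathbf{x}_2)]_+\| \le c(L_A\|\mathbf{z}_1\|+L_b)\|\mathbf{x}_1-\mathbf{x}_2\|$, and symmetrically I obtain $\hat{\mathbf{z}}_2$. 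Here the uniform bound on $\|\mathbf{z}_1\|$ from the previous step is essential.

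For this particular QP the construction can also be made by hand, which I would present as a sanity check. Since $\mathbf{s}$ enters~\eqref{eq: torque_bounds} with coefficient $\pm1$, the torque rows are always restorable by increasing $\mathbf{s}$. The CBF rows form a box in $\ddot{\mathbf{q}}$ with strictly positive width, so componentwise clipping moves $\ddot{\mathbf{q}}$ only Lipschitzly. This gives an explicit Slater-type direction and confirms MFCQ concretely.

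With these points in hand, I will apply the variational inequalities $\langle \mathbf{H}\mathbf{z}_i+\mathbf{h}(\mathbf{x}_i),\,\hat{\mathbf{z}}_j-\mathbf{z}_i\rangle\ge 0$ and add them. Strong monotonicity yields
\[
\mu\|\mathbf{z}_1-\mathbf{z}_2\|^2 \le \|\mathbf{h}(\mathbf{x}_1)-\mathbf{h}(\mathbf{x}_2)\|\,\|\mathbf{z}_1-\mathbf{z}_2\| + C\big(\|\hat{\mathbf{z}}_1-\mathbf{z}_1\|+\|\hat{\mathbf{z}}_2-\mathbf{z}_2\|\big),
\]
where $C$ bounds the gradients. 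This directly gives only a Hölder-$\tfrac12$ estimate. To upgrade it to Lipschitz, I will instead use the KKT multipliers: the term $\boldsymbol{\lambda}_i^\top(\mathbf{A}(\mathbf{x}_j)-\mathbf{A}(\mathbf{x}_i))\mathbf{z}_i$ is linear in $\|\mathbf{x}_1-\mathbf{x}_2\|$ with bounded multipliers. Combining the two KKT systems and using complementarity gives $\mu\|\mathbf{z}_1-\mathbf{z}_2\|^2\le K\|\mathbf{x}_1-\mathbf{x}_2\|\,\|\mathbf{z}_1-\mathbf{z}_2\|$, hence local Lipschitz continuity. The statement for $\ddot{\mathbf{q}}_{\mathrm{cmd}}$ then follows by projection onto the first block.

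I expect the main obstacle to be the uniform multiplier bound and the Lipschitz (rather than Hölder) upgrade, since the constraint matrix $\mathbf{A}(\mathbf{x})$, which contains $\hat{\mathbf{M}}(\mathbf{q})$, varies with $\mathbf{x}$. I will handle this by invoking the stability of MFCQ under perturbation, with compactness of the multiplier set following from the Gauvin theorem.
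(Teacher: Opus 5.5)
Everything up to your Hölder-$\tfrac12$ estimate is sound. The gap is the final ``Lipschitz upgrade'', and it cannot be repaired using (P1)--(P4) alone. Combine the two KKT systems $\mathbf H\mathbf z_k+\mathbf h(\mathbf x_k)+\mathbf A(\mathbf x_k)^\top\boldsymbol\lambda_k=\mathbf 0$ with feasibility and complementarity. Writing $\Delta\mathbf h=\mathbf h(\mathbf x_1)-\mathbf h(\mathbf x_2)$, $\Delta\mathbf A=\mathbf A(\mathbf x_1)-\mathbf A(\mathbf x_2)$ and $\Delta\mathbf b=\mathbf b(\mathbf x_1)-\mathbf b(\mathbf x_2)$, what you actually get is
\[
\mu\|\mathbf z_1-\mathbf z_2\|^2\le\big(\|\Delta\mathbf h\|+\|\boldsymbol\lambda_1\|\,\|\Delta\mathbf A\|\big)\|\mathbf z_1-\mathbf z_2\|+\|\boldsymbol\lambda_1-\boldsymbol\lambda_2\|\,\|\Delta\mathbf A\,\mathbf z_1-\Delta\mathbf b\|.
\]
The last term is $O(\|\mathbf x_1-\mathbf x_2\|)$ but carries no factor $\|\mathbf z_1-\mathbf z_2\|$. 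Uniformly bounded multipliers (Gauvin) therefore only reproduce the Hölder-$\tfrac12$ bound you already had. To close the argument you would need a multiplier selection with $\|\boldsymbol\lambda_1-\boldsymbol\lambda_2\|=O(\|\mathbf x_1-\mathbf x_2\|+\|\mathbf z_1-\mathbf z_2\|)$. MFCQ does not provide this, because the multipliers are non-unique and can jump.

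This is not a technicality; here is a counterexample. Take $\min_{\mathbf z\in\R^2}\tfrac12\|\mathbf z-(0,1)\|^2$ subject to $az_1+z_2\le c$ and $-az_1+z_2\le 0$, with parameter $(a,c)$ near $(0,0)$. Here $\mathbf H=\mathbf I$, the data are smooth, the feasible set is always nonempty, and MFCQ holds everywhere with direction $(0,-1)$.

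For small $a>0$ the solutions are:
\begin{itemize}
\item at $(a,0)$: $\mathbf z^*=(0,0)$, with multipliers $(\tfrac12,\tfrac12)$;
\item at $(a,a^2)$: $\mathbf z^*=(\tfrac a2,\tfrac{a^2}2)$, with multipliers $(\tfrac{1-a^2}4,\tfrac{3-a^2}4)$.
\end{itemize}
The solutions differ by at least $a/2$ while the parameters differ by $a^2$. So $\mathbf z^*$ is not Lipschitz on any neighbourhood of $(0,0)$; only the Hölder-$\tfrac12$ behaviour survives.

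A local Lipschitz conclusion therefore needs an extra hypothesis, verified for this particular QP. Two standard options are:
\begin{itemize}
\item LICQ at $\mathbf z^*(\mathbf x)$, which gives Robinson strong regularity (SSOSC is automatic since $\mathbf H\succ\mathbf 0$);
\item MFCQ together with a constant-rank condition, as in Ralph--Dempe or Liu.
\end{itemize}
In this QP the box rows and slack-sign rows are constant in $\mathbf x$. Rank changes of the active gradients can therefore only come from torque rows that are active with zero slack. This happens through the ranks of the submatrices $\hat{\mathbf M}_{T\cap S,\,B^c}(\mathbf q)$, where $B$, $T$ and $S$ are the active box, torque and slack-sign index sets. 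When $\rho$ is large and the hard problem is feasible, $\mathbf s^*=\mathbf 0$, so this is the typical regime rather than a degenerate one.

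The paper's own proof does not close this gap either: it makes the same leap by a bare appeal to Bonnans--Shapiro. Your more explicit route has the merit of showing exactly where an additional rank-type assumption is required.
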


    \begin{proof}
        Uniform strong convexity \ref{prop: P1} holds by construction since $\varepsilon_s > 0$. Local Lipschitz continuity of $\mathbf{z}^*(\mathbf{x})$ then follows from classical sensitivity theory for strongly convex parametric optimisation problems (see~\cite{bonnans_perturbation_2000}). Since $\ddot{\mathbf{q}}_{\mathrm{cmd}} = [\mathbf{I}_n \;\; \mathbf{0}]\mathbf{z}^*$, the result extends to $\ddot{\mathbf{q}}_{\mathrm{cmd}}$ by linearity.
    \end{proof}

    \subsection{Verification of Lipschitz Continuity}
    \par We verify that the control law \eqref{eq: control torque} constructed in Section~\ref{sec:design} satisfies Assumptions~\ref{prop: P2}--\ref{prop: P4} of Proposition~\ref{prop: Lipschitz QP} on the compact operating set $\mathcal{X}$.
    
    \subsubsection{Lipschitz Data (Assumption~\ref{prop: P2})}
        For the constraint data $\mathbf{A}(\cdot)$ and $\mathbf{b}(\cdot)$ to be Lipschitz on $\mathcal{X}$, each constituent function forming the bounds $\ddot{\mathbf{q}}_{\mathrm{lower}}(\mathbf{x})$ and $\ddot{\mathbf{q}}_{\mathrm{upper}}(\mathbf{x})$ must itself be locally Lipschitz. We verify this for each component:

        \emph{IT2-FLS ($\hat{\mathbf{f}}$):} The approximator uses Gaussian membership functions and the Nie-Tan type-reduction operator, both $C^\infty$ with respect to their inputs. The FLS inputs $\boldsymbol{\xi}_i = (\sin(q_i), \cos(q_i), \dot{q}_i, \bar{\xi}_i)$ are smooth bounded mappings of the state, so $\hat{\mathbf{f}}(\mathbf{x})$ is locally Lipschitz on $\mathcal{X}$.

        \emph{SMC ($\ddot{\mathbf{q}}_s$):} The smooth compensator $\ddot{q}_{si} = \hat{w}_i \tanh(\kappa r_i \hat{w}_i / \varepsilon_i)$ is $C^\infty$ and globally Lipschitz, by design avoiding the standard discontinuous signum function.

        \emph{DOB ($\hat{\mathbf{d}}_c, \hat{\mathbf{d}}_o$):} The observers~\eqref{eq: dob} are affine in $\dot{\mathbf{q}}$ with smooth internal states, hence locally Lipschitz.

        \emph{NCBF bounds ($\boldsymbol{\alpha}_A, \boldsymbol{\alpha}_B$):} These are composed using $R_1(\cdot) = \max\{0, \cdot\}$ and $R_2(\cdot) = \max\{0, -(\cdot)\}$, which are globally Lipschitz with constant~$1$. Therefore, $\boldsymbol{\alpha}_A$ and $\boldsymbol{\alpha}_B$ are locally Lipschitz.

        \emph{Inertia ($\hat{\mathbf{M}}$):} Since $\mathcal{X}$ excludes kinematic singularities, $\hat{\mathbf{M}}(\mathbf{q})$ is $C^1$ and strictly positive-definite on an open neighbourhood of $\mathcal{X}$, hence Lipschitz on $\mathcal{X}$.
    \hfill \QED

    \subsubsection{Uniform Feasibility (Assumption~\ref{prop: P3})}
        Because the slack variables $\mathbf{s} \in \R^n_{\geq 0}$ can absorb arbitrary torque-constraint violations, uniform feasibility of the QP reduces to showing that the hard CBF acceleration box remains strictly non-degenerate: $\ddot{q}_{\mathrm{lower}, i}(\mathbf{x}) < \ddot{q}_{\mathrm{upper}, i}(\mathbf{x})$ for all $\mathbf{x} \in \mathcal{X}$, i.e.\ the width $\Delta \ddot{q}_i \coloneqq (\alpha_{B, i} - \alpha_{A, i}) - 2\ddot{q}_{\mathrm{marg}, i} > 0$. This condition concerns state-side feasibility only; torque infeasibility under large interaction wrenches is handled separately by the slack variables and is the subject of the infeasibility simulation in Section~\ref{sec:sim}.

        \emph{Velocity constraints ($i \in \mathcal{J}_v$):} The CBF bounds yield the constant width $\alpha_{vB,i} - \alpha_{vA,i} = 2\gamma_v \dot{q}_{\max,i}$, so the non-degeneracy condition is
        \begin{equation}\label{eq: vel_feasibility}
            \gamma_v \dot{q}_{\max,i} > \ddot{q}_{\mathrm{marg},v,i}(\gamma_v).
        \end{equation}

        \emph{Position constraints ($i \in \mathcal{J}_p$):} The width varies with the state. Exploiting the geometry of the composed NCBFs---specifically that $h_{p1} + h_{p2} \geq 1$ on $\mathcal{C}^*$---and evaluating at the critical boundary yields the analytical minimum:
        \[
            \alpha_{pB,i} - \alpha_{pA,i} \geq \dot{q}_{\max,i}\!\left[\gamma_p - l\frac{\dot{q}_{\max,i}}{q_{\max,i}} K(l)\right],
        \]
        where $K(l) = \left(\tfrac{l-1}{2l-1}\right)^{\frac{l-1}{l}} \left(\tfrac{l}{2l-1}\right)$. The non-degeneracy condition becomes
        \begin{equation}\label{eq: pos_feasibility}
            \gamma_p \dot{q}_{\max,i} > 2\ddot{q}_{\mathrm{marg},p,i}(\gamma_p) + l\frac{\dot{q}_{\max,i}^2}{q_{\max,i}} K(l).
        \end{equation}

        \emph{Parameter feasibility:} The robust margin $\ddot{q}_{\mathrm{marg},i}(\gamma)$ is not static; it depends on $\gamma$ via $\ddot{q}_{\mathrm{marg}}(\gamma) = C_1 + C_2 / (2\alpha_o - \nu - \gamma)$, where $C_1 = \bar{\omega}_1^2/(2\nu\beta)$ and $C_2 = \beta/2$. Because $\ddot{q}_{\mathrm{marg}} \to \infty$ as $\gamma \to (2\alpha_o - \nu)$, conditions~\eqref{eq: vel_feasibility}--\eqref{eq: pos_feasibility} define coupled inequalities.

        \emph{Velocity case:} Multiplying~\eqref{eq: vel_feasibility} by the positive factor $(2\alpha_o - \nu - \gamma_v)$ yields the quadratic:
        \begin{equation}\label{eq: vel_quad}
            \begin{aligned}
                \dot{q}_{\max,i}\gamma_v^2 &- \!\left(\dot{q}_{\max,i}(2\alpha_o - \nu) + C_1\right)\gamma_v + \\ 
                & \!\left(C_1(2\alpha_o - \nu) + C_2\right) < 0.
            \end{aligned}
        \end{equation}
        A feasible tuning interval $\gamma_v \in (\gamma_{v,\min}, \gamma_{v,\max})$ exists if and only if the discriminant of~\eqref{eq: vel_quad} is strictly positive, which holds when the physical joint kinematic limits $\dot{q}_{\max, i}$ sufficiently outweigh the worst-case disturbance bounds $C_1, C_2$---a condition met by appropriately scaled robotic hardware and observer gains.

        \emph{Position case:} Define $C_3 \coloneqq l\,\dot{q}_{\max, i}^2\, K(l) / q_{\max, i}$ and $\bar{C}_1 \coloneqq 2C_1 \dot{q}_{\max, i} + C_3$, where the factor $2C_1 \dot{q}_{\max, i}$ accounts for the $\dot{q}_{\max, i}$-scaling of the robust margin in the position case, and $C_3$ captures the state-dependent NCBF curvature term from~\eqref{eq: position NCBF}. Multiplying~\eqref{eq: pos_feasibility} by $(2\alpha_o - \nu - \gamma_p) > 0$ and rearranging gives the quadratic:
        \begin{equation}\label{eq: pos_quad}
            \begin{aligned}
                \dot{q}_{\max,i}\gamma_p^2 &- \!\left(\dot{q}_{\max,i}(2\alpha_o - \nu) + \bar{C}_1\right)\gamma_p + \\ 
                & \!\left(\bar{C}_1(2\alpha_o - \nu) + \tfrac{2C_2}{\dot{q}_{\max,i}}\right) < 0.
            \end{aligned}
        \end{equation}
        By the same discriminant argument, a feasible interval $\gamma_p \in (\gamma_{p,\min}, \gamma_{p,\max})$ exists when the joint limits outweigh the disturbance bounds. The additional curvature term $C_3$ narrows the feasible interval relative to the velocity case, requiring either a smaller $l$ or larger $\alpha_o$ to compensate.
    \hfill \QED

    \subsubsection{MFCQ (Assumption~\ref{prop: P4})}
        MFCQ requires the existence of a strict descent direction $\mathbf{d} \in \R^{2n}$ for all active constraints at $\mathbf{z}^*(\mathbf{x})$.

        \emph{Acceleration bounds:} By \ref{prop: P3}, the box constraints are strictly non-degenerate, so the upper and lower bounds cannot be simultaneously active for the same joint. An inward-pointing direction $\mathbf{d}_{\ddot{q}}$ trivially exists for any active face of the hyper-rectangle.

        \emph{Torque constraints:} These constraints are decoupled from the strict feasibility set via the independent slack variables $\mathbf{s}$. For any active torque constraint, a direction $d_{s, i} > 0$ strictly decreases the constraint residual without violating $s_i \geq 0$.

        \par Fix any inward-pointing direction $\mathbf{d}_{\ddot{q}}$ for the active acceleration faces. For each torque constraint $i$, choose $d_{s, i} > \norm{\hat{\mathbf{M}}_{i,:} \mathbf{d}_{\ddot{q}}}$, so that the directional derivative of the corresponding constraint is strictly negative.

        \par Combining $\mathbf{d}_{\ddot{q}}$ with the positive slack directions yields a global $\mathbf{d}$ that strictly decreases all active constraints, so MFCQ holds at $\mathbf{z}^*(\mathbf{x})$ for all $\mathbf{x} \in \mathcal{X}$.

%% file: section/Tuning-Guidelines.tex
\section{Tuning Guidelines}\label{sec:tuning}
    \par Table~\ref{tab:tuning} lists the main design parameters. The paragraphs below state the conditions behind the safety and stability guarantees in Section~\ref{sec:theory}; choose $\mathcal{C}_0$ and the FLS structure so that $\mathcal{C}^* \subseteq \mathcal{C}_0$ on the intended operating envelope.

    \begin{table}[!t]
        \centering
        \caption{Summary of tunable design parameters.}
        \label{tab:tuning}
        \renewcommand{\arraystretch}{1.15}
        \footnotesize
        \begin{tabularx}{\columnwidth}{@{}c l X@{}}
            \toprule
            \textbf{Symbol} & \textbf{Module} & \textbf{Role} \\
            \midrule
            $\mathbf{M}_d, \mathbf{B}_d, \mathbf{K}_d$ & Impedance & Desired task-space inertia, damping, and stiffness \\
            $\mathbf{K}_p, \mathbf{K}_v$ & Ref.\ model & Coupling between plant and command-driven reference model \\
            $l$ & NCBF & Sharpness of the position--velocity safe-set approximation \\
            $\gamma_v, \gamma_p$ & NCBF & Convergence rates of the velocity and position barriers \\
            $\mathbf{K}_r, \boldsymbol{\Lambda}$ & Adaptation & Gains of the filtered reference-model tracking error $\mathbf{r}$ \\
            $\boldsymbol{\Gamma}_{\theta}$ & FLS & Adaptation rate of the consequent weights \\
            $\boldsymbol{\Gamma}_{w}$ & SMC & Adaptation rate of the residual bound estimate $\hat{\mathbf{w}}$ \\
            $\sigma_w$ & SMC & Leakage in the $\hat{\mathbf{w}}$ update law \\
            $\varepsilon_i$ & SMC & Boundary-layer thickness of the smooth compensator \\
            $\alpha_c$ & DOB & Gain of the compensation disturbance observer \\
            $\alpha_o$ & DOB & Gain of the safety disturbance observer \\
            $\nu$ & Robust NCBF & Coupling gain in the observer-based CBF margin \\
            $\beta$ & Robust NCBF & Scaling of the robust acceleration margin \\
            $\bar{\omega}_1$ & Robust NCBF & Conservative bound on $\|\dot{\mathbf{d}}_{\mathrm{CBF}}\|$ \\
            $\rho$ & QP & Exact-penalty weight on torque slack \\
            $\varepsilon_s$ & QP & Regularisation on the slack variables \\
            \bottomrule
        \end{tabularx}
    \end{table}

    \par \textbf{Reference model and adaptation.} The command-driven reference model requires positive-definite diagonal gains satisfying $\mathbf{K}_v = \boldsymbol{\Lambda} + \mathbf{K}_r$ and $\mathbf{K}_p = \mathbf{K}_r \boldsymbol{\Lambda}$. The stability proof of Theorem~\ref{Thm: Main Result}(iii) treats $(\mathbf{K}_r, \boldsymbol{\Lambda})$ as the natural-frequency and damping of the filtered error dynamics; larger $\mathbf{K}_r$ yields faster reference-model tracking but also amplifies the effect of any unmodelled QP correction through $\ddot{\mathbf{q}}_{\mathrm{QP}}$. The FLS and SMC gains $(\boldsymbol{\Gamma}_\theta, \boldsymbol{\Gamma}_w, \sigma_w, \varepsilon_i)$ enter the same Lyapunov analysis: $\boldsymbol{\Gamma}_\theta$ and $\boldsymbol{\Gamma}_w$ must be positive definite, $\sigma_w > 0$ should be kept small to limit parameter drift, and each $\varepsilon_i > 0$ sets the residual approximation error that bounds the SMC layer in Lemma~\ref{lem: Robust Bound}. The compensation observer gain $\alpha_c > 0$ should be chosen large enough for effective disturbance rejection, but need only be consistent with the boundedness assumptions on $\mathbf{d}(t)$ in Assumption~\ref{assum: bounds}.

    \par \textbf{Safety and robust NCBF parameters.} State invariance (Theorem~\ref{Thm: Main Result}(ii)) rests on Assumption~\ref{assum: valid_cbf} and the hard CBF constraints~\eqref{eq: robust_CBF}. The robust margin depends on $(\nu, \beta, \bar{\omega}_1, \alpha_o, \gamma_v, \gamma_p)$ through~\eqref{eq: Torque Constraints} and must satisfy $0 < \nu < 2\alpha_o$ together with $\max\{\gamma_v, \gamma_p\} < 2\alpha_o - \nu$. Increasing $\bar{\omega}_1$ or decreasing $\beta$ enlarges the margin $\ddot{q}_{\mathrm{marg},i}$ and makes the safety filter more conservative; increasing $\alpha_o$ allows larger $\gamma_v$ and $\gamma_p$ before the margin diverges. The NCBF rates $\gamma_v$ and $\gamma_p$ should be selected inside the feasible intervals implied by~\eqref{eq: vel_quad} and~\eqref{eq: pos_quad}. A positive discriminant in each quadratic defines an admissible interval $(\gamma_{(\cdot),\min}, \gamma_{(\cdot),\max})$; larger $\gamma$ improves constraint convergence but consumes more of the available acceleration box through $\ddot{q}_{\mathrm{marg},i}$. For position-constrained joints, the exponent $l$ in~\eqref{eq: position NCBF} trades approximation of the true safe set against curvature of the barrier: larger even $l$ makes $\mathcal{C}^*(l)$ approach $\mathcal{C}$, but also increases the term $C_3$ in~\eqref{eq: pos_quad} and narrows the feasible range of $\gamma_p$.

    \par \textbf{QP feasibility and torque constraints.} Uniform feasibility of the hard CBF box (Proposition~\ref{prop: Lipschitz QP}(P3)) requires, for every joint $i$, that the robust lower and upper acceleration bounds remain strictly separated: $\ddot{q}_{\mathrm{lower},i}(\mathbf{x}) < \ddot{q}_{\mathrm{upper},i}(\mathbf{x})$ for all $\mathbf{x}$ in the operating set. This condition is most restrictive near the constraint boundary and for joints in $\mathcal{J}_p$, where both position and velocity limits interact. When a feasible interval for $\gamma_v$ or $\gamma_p$ exists, any interior choice preserves a nonempty hard constraint set for the QP. Torque satisfaction whenever hard feasibility holds (Theorem~\ref{Thm: Main Result}(i)) requires the exact penalty weight $\rho$ to exceed the optimal Lagrange multiplier of the hard-constrained problem; in practice, $\rho$ should be increased until torque slack remains inactive during nominal operation. The regularisation $\varepsilon_s > 0$ ensures strong convexity of the QP and the local Lipschitz continuity of $\ddot{\mathbf{q}}_{\mathrm{cmd}}$; it should be kept small so that slack activation remains a last resort rather than a persistent softening of the torque bounds.

    \par \textbf{Operating regime.} Theorem~\ref{Thm: Main Result} applies on $(\mathbf{x}, \mathbf{F}_h) \in \mathcal{A}$. If the interaction wrench exceeds what the torque limits can support, slack activation is unavoidable; begin tuning from the joint limits and the feasible intervals for $(\nu, \beta, \bar{\omega}_1, \alpha_o, \gamma_v, \gamma_p, l)$.

%% file: section/Simulation.tex
\begin{figure*}[t]
  \centering
  \includegraphics[width=\textwidth]{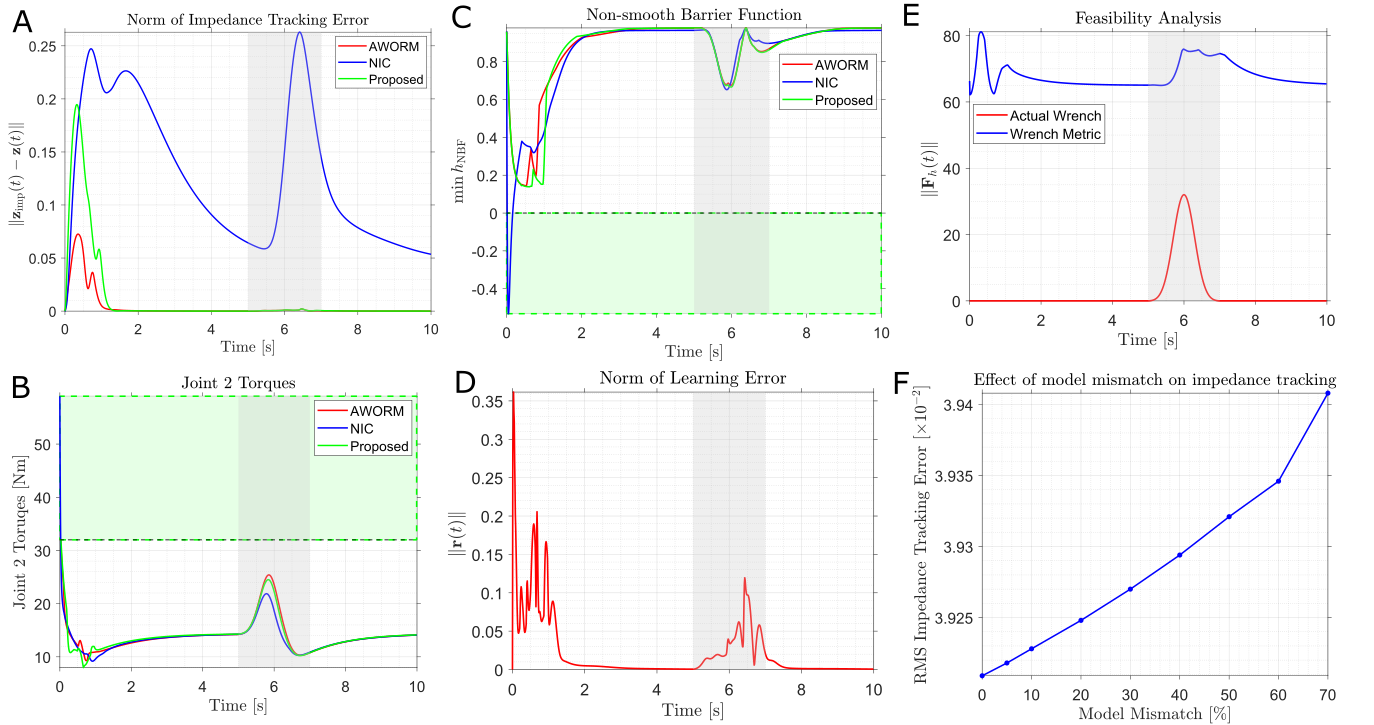}
  \caption{\textbf{A}: Impedance tracking error $\norm{\mathbf{z}_{\mathrm{imp}}(t)-\mathbf{z}(t)}$ for the proposed controller (green), AWORM (red), and NIC (blue). The shaded region denotes the interval of interaction wrenches. \textbf{B}: Joint~2 control torques under nominal limits. Dashed lines show $\tau_{\max}$; the green region indicates torque constraint violations. \textbf{C}: Minimum NCBF over all joints. Dashed horizontal lines show the safety threshold; the green region indicates safety violations. \textbf{D}: Norm of the reference-model learning error $\norm{\mathbf{r}(t)}$ for the proposed controller. \textbf{E}: Applied wrench norm $\norm{\mathbf{F}_h(t)}$ against the wrench metric $F_{\max}(\mathbf{x}(t))$ from~\eqref{eq: wrench_metric}. \textbf{F}: RMS impedance tracking error over the full simulation horizon as model mismatch increases from $0\%$ to $70\%$.}
  \label{fig:images}
\end{figure*}

\section{Simulation Validation}\label{sec:sim}
    \subsection{Setup}
    \par We validate the proposed controller on a simulated KINOVA Gen3 7-DOF manipulator. The true plant follows~\eqref{eq: Robot Model} with manufacturer inertial parameters. In simulation, the disturbance is set to joint friction, $\boldsymbol{\tau}_d(t)=\boldsymbol{\tau}_{\mathrm{fric}}(\dot{\mathbf{q}})$, modelled by the smooth element-wise approximation
    \begin{equation}\label{eq: friction}
        \begin{aligned}
            \tau_{\mathrm{fric},i}(\dot{q}_i) &= \bigl(f_{c,i} + (f_{s,i}-f_{c,i}) e^{-(\dot{q}_i/v_{s,i})^2}\bigr)\tanh(\kappa \dot{q}_i) \\
            &\quad + f_{v,i}\dot{q}_i, \qquad i \in [n],
        \end{aligned}
    \end{equation}
    with $\kappa = 25$\,rad$^{-1}$s. The coefficients are arbitrary but physically reasonable joint-wise values, not identified parameters.

    \par The controller uses only~\eqref{eq: Estimated Model} with $(\hat{\mathbf{M}}, \hat{\mathbf{C}}, \hat{\mathbf{G}})$ obtained by uniformly perturbing each physical parameter by up to $\pm 50\%$; neither~\eqref{eq: friction} nor $\boldsymbol{\tau}_{\mathrm{fric}}$ appears in the nominal model.

    \par The task is to regulate a fixed end-effector pose $\mathbf{z}_d$ through the desired impedance model $\mathbf{M}_d = \mathbf{I}_6$, $\mathbf{K}_d = \diag(49\mathbf{I}_3, 16\mathbf{I}_3)$, and $\mathbf{B}_d = \diag(49\mathbf{I}_3, 12\mathbf{I}_3)$ while rejecting a measured human wrench. The interaction force $\mathbf{F}_h(t)$ has peak components $[20, 20, -15]$\,N along the end-effector translational axes, zero moment components, and is shaped by a half-sine pulse with a Gaussian envelope over $[5, 7]$\,s. All simulations use a fixed-step RK4 integrator with $\Delta t = 10$\,ms over $10$\,s, starting from $\mathbf{q}(0) = [0; -1.0; 1.0; -0.5; 0.5; -1.2; 0]$ and $\dot{\mathbf{q}}(0) = \mathbf{0}$, with $\mathbf{x}(0) \in \mathcal{C}^* \subseteq \mathcal{C}_0$. Impedance tracking performance is reported as $\norm{\mathbf{z}_{\mathrm{imp}}(t)-\mathbf{z}(t)}$, with orientation evaluated via the quaternion error vector used in simulation.

    \par \textbf{Constraints.} Joints $2$, $4$, and $6$ are subject to position and velocity limits ($\mathcal{J}_p = \{2,4,6\}$, $q_{\max,i} = [126^\circ, 147^\circ, 117^\circ]$, $\dot{q}_{\max,i} = [17, 17, 25]$\,rpm); all other joints have velocity limits only ($\mathcal{J}_v = \{1,3,5,7\}$, $\dot{q}_{\max,i} = [17, 17, 25, 25]$\,rpm). Unless stated otherwise, torque limits are $\pm 32$\,Nm on joints $1$--$4$ and $\pm 13$\,Nm on joints $5$--$7$.

    \par \textbf{Controller parameters.} The IT2-FLS uses four inputs per joint, $(\sin q_i, \cos q_i, \dot{q}_i, \bar{\xi}_i)$, three Gaussian membership functions per input ($M=81$ rules), and first-order Sugeno consequents with five parameters per rule, giving $2835$ adapted parameters in total. The reference-model and adaptation gains are $\mathbf{K}_r = \boldsymbol{\Lambda} = \sqrt{250}\,\mathbf{I}_7$. The NCBF uses $l=6$, $\gamma_p = \gamma_v = 10$, $\beta = 35$, and $\bar{\omega}_1 = 75$. The DOB gains are $\alpha_c = 70$ and $\alpha_o = 80$; the SMC leakage is $\sigma_w = 0.0125$. The soft-constrained QP uses $\rho = 1000$, $\varepsilon_s = 10^{-6}$, and is solved online in MATLAB with qpOASES and warm-starting at default tolerances; the average qpOASES solve time is $5.4 \times 10^{-5}$\,s per control step.

    \par \textbf{Baselines.} We compare against:
    \begin{enumerate}\renewcommand{\labelenumi}{(\roman{enumi})}
        \item \textbf{NIC}: a nominal impedance controller using only the perturbed model, with no FLS, DOB, or safety filter;
        \item \textbf{AWORM}: the same FLS, DOB, and safety filter as the proposed method, but with adaptation driven by the impedance filtered error $\mathbf{r}_{\mathrm{imp}} = \dot{\mathbf{e}}_{\mathrm{imp}} + \boldsymbol{\Lambda}\mathbf{e}_{\mathrm{imp}}$ for $\mathbf{e}_{\mathrm{imp}} = \mathbf{q}_{\mathrm{imp}} - \mathbf{q}$, rather than the filtered reference-model tracking error $\mathbf{r}$.
    \end{enumerate}

    \par \textbf{Scenarios.} Four simulation studies are reported:
    \begin{enumerate}\renewcommand{\labelenumi}{(\roman{enumi})}
        \item \emph{Nominal comparison} (Fig.~\ref{fig:images}A--C): all three controllers under the nominal torque limits above;
        \item \emph{Model-mismatch sweep} (Fig.~\ref{fig:images}F): proposed controller only, with perturbation level varied from $0\%$ to $70\%$;
        \item \emph{Bursting comparison} (Fig.~\ref{fig:images_2}A--B): proposed versus AWORM with tightened limits of $\pm 22.5$\,Nm on joints $1$--$4$;
        \item \emph{Infeasibility test} (Fig.~\ref{fig:images_2}C--F): proposed controller only, with limits further reduced to $\pm 15$\,Nm on joints $1$--$4$.
    \end{enumerate}

    \subsection{Results}

    \subsubsection{Nominal Comparison}
    \par Fig.~\ref{fig:images}A compares the impedance tracking error $\norm{\mathbf{z}_{\mathrm{imp}}(t)-\mathbf{z}(t)}$ for all three controllers under the nominal torque limits. The NIC performs significantly worse throughout the run. Without adaptation or a safety filter, it cannot compensate for the combined effect of the $50\%$ parametric mismatch and the unmodeled friction~\eqref{eq: friction}, so its computed torques are systematically biased and impedance regulation degrades before and during the interaction interval. Both adaptive controllers achieve much lower error because the IT2-FLS and DOB learn and reject the resulting structured and unstructured residuals online; the proposed method tracks most closely.

    \par Fig.~\ref{fig:images}B--C show that NIC violates torque and state constraints, whereas both adaptive controllers remain within bounds and maintain $\min_i {^i h(\mathbf{x}_i)} \geq 0$. NIC departs from $\mathcal{C}^*$ immediately because it has no safety filter and applies torques from the mismatched nominal model despite $\mathbf{x}(0) \in \mathcal{C}^*$.

    \subsubsection{Effect of Model Mismatch}
    \par Fig.~\ref{fig:images}F reports the RMS impedance tracking error over the full $10$\,s trajectory as parameter perturbation increases from $0\%$ to $70\%$. Unmodeled friction~\eqref{eq: friction} is present in every run, so the error grows with mismatch but remains on the order of $10^{-2}$ even at $70\%$.

    \subsubsection{Constraint-Aware Adaptation versus AWORM Bursting}
    \par Fig.~\ref{fig:images_2}A--B isolates the effect of the command-driven reference model by comparing the proposed controller against AWORM with torque limits of $\pm 22.5$\,Nm on joints $1$--$4$. Both controllers share the same FLS, DOB, and safety filter; the only difference is the signal used for adaptation.

    \par AWORM exhibits the bursting behaviour characteristic of constrained learning-based systems. When the NCBFs activate, the safety filter suppresses motion and the impedance error grows. Because AWORM updates the FLS on $\mathbf{r}_{\mathrm{imp}}$, it interprets this constraint-induced deficit as impedance-model tracking error and adapts against the safety filter. When the state re-enters the interior of $\mathcal{C}^*$, the stored adaptation energy is released abruptly, producing the high-frequency torque oscillations in Fig.~\ref{fig:images_2}B and the degraded tracking in Fig.~\ref{fig:images_2}A.

    \par The proposed controller avoids this failure mode because adaptation is driven by the filtered reference-model tracking error $\mathbf{r}$ rather than $\mathbf{r}_{\mathrm{imp}}$, keeping the torque command smooth and the impedance error lower than AWORM throughout the interaction.

    \subsubsection{Performance Under Infeasibility}
    \par Fig.~\ref{fig:images_2}C--F evaluates the proposed controller under limits of $\pm 15$\,Nm on joints $1$--$4$, a regime where the hard torque QP with $\mathbf{s}=\mathbf{0}$ is frequently infeasible during interaction even though the hard CBF acceleration box remains nonempty (Proposition~\ref{prop: Lipschitz QP}(P3)). Fig.~\ref{fig:images_2}C compares the applied wrench $\norm{\mathbf{F}_h(t)}$ against the online wrench metric $F_{\max}(\mathbf{x}(t))$ from~\eqref{eq: wrench_metric}. Whenever $\norm{\mathbf{F}_h(t)} > F_{\max}(\mathbf{x}(t))$, no acceleration can satisfy both the hard CBF bounds and the zero-slack torque constraints, and the soft-constrained QP must activate $\mathbf{s}^* > \mathbf{0}$.

    \par Under repeated torque infeasibility, impedance tracking and $\norm{\mathbf{r}(t)}$ remain bounded, torques stay smooth, and $\mathbf{x}(t)\in\mathcal{C}^*$ because the CBF constraints remain hard.

    \begin{figure*}[t]
      \centering
      \includegraphics[width=\textwidth]{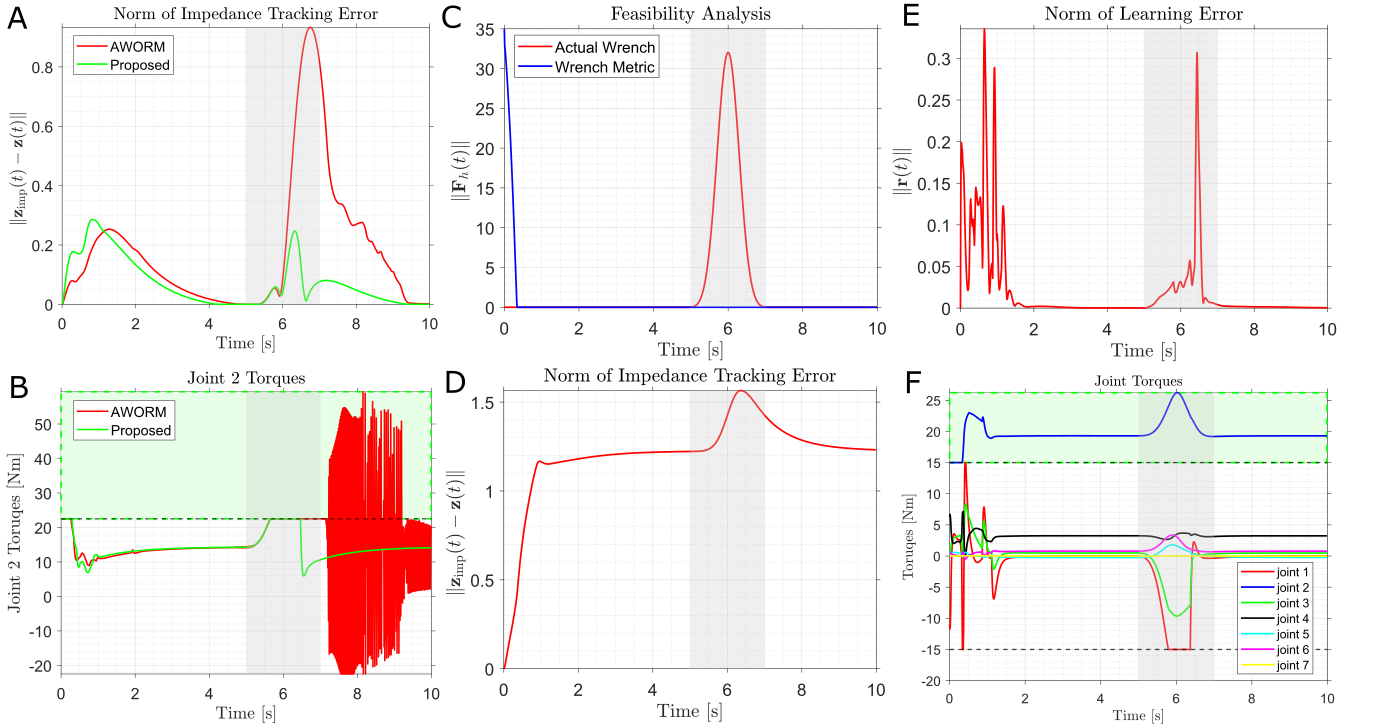}
      \caption{\textbf{A}: Impedance tracking error $\norm{\mathbf{z}_{\mathrm{imp}}(t)-\mathbf{z}(t)}$ for the proposed controller (green) and AWORM (red) with torque limits of $\pm 22.5$\,Nm on joints $1$--$4$. \textbf{B}: Joint~2 torques under the same limits; the green region indicates violations. AWORM exhibits high-frequency chattering and transient torque violations. \textbf{C}--\textbf{F}: Proposed controller with further tightened limits of $\pm 15$\,Nm on joints $1$--$4$. \textbf{C}: $\norm{\mathbf{F}_h(t)}$ versus $F_{\max}(\mathbf{x}(t))$, indicating hard-constraint infeasibility when the applied wrench exceeds the metric. \textbf{D}: Impedance tracking error $\norm{\mathbf{z}_{\mathrm{imp}}(t)-\mathbf{z}(t)}$. \textbf{E}: Reference-model learning error $\norm{\mathbf{r}(t)}$. \textbf{F}: Joint torques; the proposed controller remains smooth with no chattering.}
      \label{fig:images_2}
    \end{figure*}

%% file: section/Conclusion.tex
\section{Conclusion}\label{sec:conclusion}
    \par We presented a safety-critical online adaptive impedance controller combining a composed position-velocity NCBF, IT2-FLS adaptation with a command-driven reference model, and a soft-constrained QP with exact penalties. On $(\mathbf{x}, \mathbf{F}_h) \in \mathcal{A}$, the closed loop keeps $\mathbf{u} \in \mathcal{U}$, renders $\mathcal{C}^* \subseteq \mathcal{C}_0$ invariant, and yields UUB impedance-tracking error. Simulations on a 7-DOF KINOVA arm with $50\%$ parametric mismatch, unmodeled friction, and a pulsed interaction wrench showed improved tracking over NIC and AWORM, robustness to increasing mismatch, elimination of AWORM bursting, and smooth behaviour under tightened torque limits monitored by $F_{\max}(\mathbf{x})$.

    \par Future work will investigate neural-network function approximation, task-space safety constraints, and experimental validation on additional manipulators.

%% file: section/Appendix.tex
\appendix
\subsection{Proof of Theorem~\ref{Thm: Main Result}}\label{appendix: proof}
    \subsubsection{Statement~\ref{stat: 1}}
        By the hypotheses of Theorem~\ref{Thm: Main Result}, $(\mathbf{x}, \mathbf{F}_h) \in \mathcal{A}$, so the hard-constrained QP~\eqref{eq: Torque Constraints} (with $\mathbf{s} = \mathbf{0}$) is feasible. By the theory of exact penalty functions~\cite{kerrigan_soft_2000}, for a sufficiently large penalty weight $\rho$ exceeding the optimal Lagrange multiplier of the hard-constrained QP, the soft-constrained QP solution satisfies $\mathbf{s}^* = \mathbf{0}$. Thus, the control torque~\eqref{eq: control torque} satisfies $\mathbf{u} \in \mathcal{U}$ by construction.
    \hfill \QED

    \subsubsection{Statement~\ref{stat: 2}}
        The CBF constraints~\eqref{eq: robust_CBF} are hard in~\eqref{eq: Torque Constraints} and are never relaxed by $\mathbf{s}$. The result follows from~\cite[Thm.~1]{glotfelter_nonsmooth_2017}, Assumption~\ref{assum: valid_cbf}, and~\eqref{eq: robust_CBF}.
    \hfill \QED

    \subsubsection{Statement~\ref{stat: 3}}
        \par The proof employs a composite Lyapunov approach. We first analyse the reference-model subsystem ($V_1$) using $\mathbf{r}$, then the impedance tracking subsystem using $\mathbf{r}_{\mathrm{mm}}$.

        \par Let $\boldsymbol{\phi}_{\theta} \coloneqq \boldsymbol{\theta} - \boldsymbol{\theta}^*$ and $\boldsymbol{\psi}_{w} \coloneqq \mathbf{w} - \hat{\mathbf{w}}$. Recall that $\mathbf{e} = \mathbf{q}_{\mathrm{ref}} - \mathbf{q}$, which has dynamics $\ddot{\mathbf{e}} = \ddot{\mathbf{q}}_{\mathrm{ref}} - \ddot{\mathbf{q}}$. From \eqref{eq: Reference Governor}, $\ddot{\mathbf{q}}_{\mathrm{ref}} = \ddot{\mathbf{q}}_{\mathrm{cmd}} + \mathbf{K}_v(\dot{\mathbf{q}} - \dot{\mathbf{q}}_{\mathrm{ref}}) + \mathbf{K}_p(\mathbf{q} - \mathbf{q}_{\mathrm{ref}})$. Substituting \eqref{eq: d_def} into \eqref{eq: dcbf} gives $\mathbf{d}_{\mathrm{CBF}}(t) = -\boldsymbol{\zeta}(\boldsymbol{\xi})^\top \boldsymbol{\phi}_{\theta} - \mathbf{e}_d(t)$, thus \eqref{eq: Simplified Dynamics} can be rewritten as $\ddot{\mathbf{q}} = \ddot{\mathbf{q}}_{\mathrm{cmd}} + \ddot{\mathbf{q}}_{\mathrm{s}} - \mathbf{e}_d - \boldsymbol{\zeta}(\boldsymbol{\xi})^\top \boldsymbol{\phi}_{\theta}$. It follows that $\ddot{\mathbf{e}} = -\mathbf{K}_v \dot{\mathbf{e}} - \mathbf{K}_p \mathbf{e} + \mathbf{e}_d - \ddot{\mathbf{q}}_{\mathrm{s}} + \boldsymbol{\zeta}(\boldsymbol{\xi})^\top \boldsymbol{\phi}_{\theta}$. As such, $\dot{\mathbf{r}} = - \mathbf{K}_r \mathbf{r} + \mathbf{e}_d - \ddot{\mathbf{q}}_{\mathrm{s}} + \boldsymbol{\zeta}(\boldsymbol{\xi})^\top \boldsymbol{\phi}_{\theta}$ (follows from the definitions of $\mathbf{K}_v, \mathbf{K}_p, \mathbf{r}$).

        \par Consider the Lyapunov function candidate
        \begin{equation}\label{eq: Lyapunov candidate}
            V_1 = \tfrac{1}{2} \| \mathbf{r} \|^2 
            + \tfrac{1}{2} \boldsymbol{\phi}_{\theta}^\top \boldsymbol{\Gamma}_{\theta}^{-1} \boldsymbol{\phi}_{\theta} 
            + \tfrac{1}{2} \boldsymbol{\psi}_{w}^\top \boldsymbol{\Gamma}_{w}^{-1} \boldsymbol{\psi}_{w},
        \end{equation}
        which is positive-definite and radially unbounded (Rayleigh-Ritz Theorem). Differentiating and substituting~\eqref{eq: theta law} and \eqref{eq: w Law} with $\dot{\boldsymbol{\phi}}_\theta = \dot{\boldsymbol{\theta}}$, and $\dot{\boldsymbol{\psi}}_w = -\dot{\hat{\mathbf{w}}}$ gives:
        \begin{equation}\label{eq: Lyapunov Derivative}
            \begin{aligned}
                \dot{V}_1 &= \mathbf{r}^\top (-\mathbf{K}_r \mathbf{r} + \mathbf{e}_d - \ddot{\mathbf{q}}_{s} + \boldsymbol{\zeta}(\boldsymbol{\xi})^\top \boldsymbol{\phi}_{\theta}) \\
                & \quad - \boldsymbol{\phi}_{\theta}^\top \operatorname{Proj}_{\boldsymbol{\Gamma}_\theta}\!\big(\boldsymbol{\theta}, -\boldsymbol{\zeta}(\boldsymbol{\xi}) \mathbf{r}, f(\boldsymbol{\theta}))
                + \boldsymbol{\psi}_{w}^\top |\mathbf{r}| - \sigma_{w} \boldsymbol{\psi}_{w}^\top \hat{\mathbf{w}} \\
                &\leq \mathbf{r}^\top (-\mathbf{K}_r \mathbf{r} + \mathbf{e}_d - \ddot{\mathbf{q}}_{s} + \boldsymbol{\zeta}(\boldsymbol{\xi})^\top \boldsymbol{\phi}_{\theta}) \\
                & \quad - \boldsymbol{\phi}_{\theta}^\top \boldsymbol{\zeta}(\boldsymbol{\xi}) \mathbf{r}
                + \boldsymbol{\psi}_{w}^\top |\mathbf{r}| - \sigma_{w} \boldsymbol{\psi}_{w}^\top \hat{\mathbf{w}}.
            \end{aligned}
        \end{equation}
        The cross terms $\mathbf{r}^\top\boldsymbol{\zeta}^\top \boldsymbol{\phi}_\theta - \boldsymbol{\phi}_\theta^\top \boldsymbol{\zeta}\mathbf{r}$ cancel, and using $\mathbf{r}^\top\mathbf{e}_d \leq |\mathbf{r}|^\top\mathbf{w}$ and $|\mathbf{r}|^\top\mathbf{w} - \boldsymbol{\psi}_w^\top|\mathbf{r}| = |\mathbf{r}|^\top\hat{\mathbf{w}}$, Lemma~\ref{lem: Robust Bound} gives $0 \leq |\mathbf{r}|^\top\hat{\mathbf{w}} - \mathbf{r}^\top\ddot{\mathbf{q}}_s \leq \epsilon$ with $\epsilon = \sum_i\varepsilon_i$. Applying the Cauchy-Schwartz and Young's Inequalities to the leakage term yields $\sigma_{w} \boldsymbol{\psi}_{w}^\top \hat{\mathbf{w}} \leq -\frac{1}{2} \sigma_{w} \|\boldsymbol{\psi}_{w} \|^2 + \frac{1}{2} \sigma_{w} W^2$. Thus, we have $\dot{V}_1 \leq -\mathbf{r}^\top \mathbf{K}_r \mathbf{r} -\tfrac{1}{2} \sigma_{w} \|\boldsymbol{\psi}_{w} \|^2 + \epsilon + K$, where $K = \tfrac{1}{2} \sigma_{w} W^2$.
        
        \par Let $\mathbf{E} = (\mathbf{r}, \boldsymbol{\phi}_{\theta}, \boldsymbol{\psi}_{w}) \in \R^{2n + N}$, $z_1 = \min{\left(2\lambda_{\min}(\mathbf{K}_r),\, \sigma_w \lambda_{\min}(\boldsymbol{\Gamma}_w) \right)}$, and $V_{\theta} = \tfrac{1}{2} \boldsymbol{\phi}_{\theta}^\top \boldsymbol{\Gamma}_{\theta}^{-1} \boldsymbol{\phi}_{\theta}$. By the projection operator~\cite{lavretsky_projection_2012} and Assumption~\ref{assum: bounds}, $\norm{\boldsymbol{\phi}_{\theta}} \leq \Phi_{\max} \implies V_{\theta} \leq V_{\theta, \max}$, thus it follows that $-(\tfrac{1}{2} \| \mathbf{r} \|^2  + \tfrac{1}{2} \boldsymbol{\psi}_{w}^\top \boldsymbol{\Gamma}_{w}^{-1} \boldsymbol{\psi}_{w}) = -V_1 + V_{\theta} \leq -V_1 + V_{\theta, \max}$. Substituting this into $\dot{V}_1$ gives $\dot{V}_1 \leq -z_1 V_1 + (\epsilon + K + z_1 V_{\theta, \max})$.

        \par By the comparison lemma~\cite{khalil_nonlinear_2002}, $\mathbf{E}$, and thus $\mathbf{r}$, is UUB. Since $\boldsymbol{\Lambda}$ is positive-definite and $\dot{\mathbf{e}} = -\boldsymbol{\Lambda}\mathbf{e} + \mathbf{r}$ is a stable first-order linear time-invariant (LTI) system, the vector $(\mathbf{e},\dot{\mathbf{e}})$ is UUB.

        \par Define the impedance filtered error
        \begin{equation}\label{eq: r_imp}
            \mathbf{r}_{\mathrm{imp}} = \dot{\mathbf{e}}_{\mathrm{imp}} + \boldsymbol{\Lambda} \mathbf{e}_{\mathrm{imp}}, \quad \mathbf{e}_{\mathrm{imp}} = \mathbf{q}_{\mathrm{imp}} - \mathbf{q}.
        \end{equation}
        Now, we consider the \emph{model impedance mismatch}. Let $\mathbf{e}_{\mathrm{mm}} = \mathbf{q}_{\mathrm{imp}} - \mathbf{q}_{\mathrm{ref}}$ and $\mathbf{r}_{\mathrm{mm}} = \mathbf{r}_{\mathrm{imp}} - \mathbf{r} = \dot{\mathbf{e}}_{\mathrm{mm}} + \boldsymbol{\Lambda} \mathbf{e}_{\mathrm{mm}}$. The mismatch error has dynamics $\ddot{\mathbf{e}}_{\mathrm{mm}} = \ddot{\mathbf{q}}_{\mathrm{imp}} - \ddot{\mathbf{q}}_{\mathrm{ref}}$. Recall from Section~\ref{subsec: pch} that $\ddot{\mathbf{q}}_{\mathrm{QP}} = \ddot{\mathbf{q}}_{\mathrm{nom}} - \ddot{\mathbf{q}}_{\mathrm{cmd}}$ and $\ddot{\mathbf{q}}_{\mathrm{nom}} = \ddot{\mathbf{q}}_{\mathrm{cmd}} + \ddot{\mathbf{q}}_{\mathrm{QP}}$. Therefore, $\ddot{\mathbf{q}}_{\mathrm{cmd}} = (\ddot{\mathbf{q}}_{\mathrm{imp}} - \ddot{\mathbf{q}}_{\mathrm{QP}}) + \mathbf{K}_v \dot{\mathbf{e}}_{\mathrm{mm}} + \mathbf{K}_p \mathbf{e}_{\mathrm{mm}}$ and $\ddot{\mathbf{e}}_{\mathrm{mm}} =  -\mathbf{K}_v \dot{\mathbf{e}}_{\mathrm{mm}} - \mathbf{K}_p \mathbf{e}_{\mathrm{mm}} + \ddot{\mathbf{q}}_{\mathrm{QP}}$ and $\dot{\mathbf{r}}_{\mathrm{mm}} = -\mathbf{K}_r \mathbf{r}_{\mathrm{mm}} + \ddot{\mathbf{q}}_{\mathrm{QP}}$. Moreover, $\ddot{\mathbf{q}}_{\mathrm{cmd}}$ is bounded by the hard acceleration box constraints in~\eqref{eq: Torque Constraints}, and $\ddot{\mathbf{q}}_{\mathrm{nom}}$ is bounded by construction; hence $\ddot{\mathbf{q}}_{\mathrm{QP}}$ is bounded.

        \par Consider the Lyapunov function candidate
        \begin{equation}\label{eq: Lyapunov candidate 2}
            V_2 = \tfrac{1}{2} \| \mathbf{r}_{\mathrm{mm}} \|^2,
        \end{equation}
        which is positive-definite and radially unbounded (Rayleigh-Ritz Theorem). Differentiating $V_2$ gives $\dot{V_2} = -\mathbf{r}_{\mathrm{mm}}^\top \mathbf{K}_r \mathbf{r}_{\mathrm{mm}} + \mathbf{r}_{\mathrm{mm}}^\top \ddot{\mathbf{q}}_{\mathrm{QP}}$. By Cauchy-Schwartz, we have $\dot{V_2} \leq -\lambda_{\min}(\mathbf{K}_r) \norm{\mathbf{r}_{\mathrm{mm}}}^2 + \norm{\mathbf{r}_{\mathrm{mm}}} \norm{\ddot{\mathbf{q}}_{\mathrm{QP}}}$ and applying Young's Inequality gives $\dot{V_2} \leq -\lambda_{\min}(\mathbf{K}_r) V_2 + \tfrac{1}{2\lambda_{\min}(\mathbf{K}_r)} \norm{\ddot{\mathbf{q}}_{\mathrm{QP}}}^2$. By the comparison lemma~\cite{khalil_nonlinear_2002}, $\mathbf{r}_{\mathrm{mm}}$ is UUB. From \eqref{eq: r_imp}, $\mathbf{r}_{\mathrm{imp}} = \mathbf{r}_{\mathrm{mm}} + \mathbf{r}$, thus $\mathbf{r}_{\mathrm{imp}}$ is UUB. $\dot{\mathbf{e}}_{\mathrm{imp}} = -\boldsymbol{\Lambda}\mathbf{e}_{\mathrm{imp}} + \mathbf{r}_{\mathrm{imp}}$ is a stable first-order LTI system; thus, the vector $(\mathbf{e}_{\mathrm{imp}},\dot{\mathbf{e}}_{\mathrm{imp}})$ is UUB. In the special case where $\norm{\ddot{\mathbf{q}}_{\mathrm{QP}}} = 0$ (i.e., no QP constraints are active), $\mathbf{r}_{\mathrm{mm}}$ is asymptotically stable; thus, $\mathbf{r}_{\mathrm{imp}}$ is bounded by $\mathbf{r}$.

        \par Finally, as $(\mathbf{e}_{\mathrm{imp}}, \dot{\mathbf{e}}_{\mathrm{imp}})$ is UUB, then with Lemma~\ref{lem: Joint-Task Error}, there exists $T>0$ such that $\norm{\mathbf{z}_{\mathrm{imp}}(t) - \mathbf{z}(t)} \leq \rho(\epsilon)$ for all $t \geq T$, where $\rho(\cdot)$ is some class-$\mathcal{K}$ function.
    \hfill \QED